\documentclass[final]{nesy2026} 

\usepackage{longtable}

\usepackage{booktabs}

\usepackage{multirow}
\usepackage{amsmath,amssymb}
\usepackage{graphicx}
\usepackage{float}   
\usepackage{wrapfig}
\usepackage{url}
\usepackage{tikz}
\usetikzlibrary{arrows.meta,positioning}
\usepackage[load-configurations=version-1]{siunitx}

\definecolor{oiBlue}{HTML}{0072B2}
\definecolor{oiGreen}{HTML}{009E73}
\definecolor{oiOrange}{HTML}{E69F00}
\definecolor{oiVerm}{HTML}{D55E00}
\definecolor{oiPurple}{HTML}{CC79A7}

\usepackage[load-configurations=version-1]{siunitx} 

\theorembodyfont{\upshape}
\theoremheaderfont{\scshape}
\theorempostheader{:}
\theoremsep{\newline}

\title[FSG-LTN-GAN]{Generate in the Chart, Not on the Boundary: Function-Symbol Grounding for Hard Constraints in~LTN-GANs}

\clearauthor{\Name{Nijesh Upreti} \Email{n.upreti@ed.ac.uk}\\
        \Name{Vaishak Belle} \Email{vbelle@ed.ac.uk}\\
        \addr The University of Edinburgh, 10 Crichton Street, Edinburgh, EH8 9AB, UK}

\begin{document}
\maketitle

\begin{abstract}
Logic Tensor Network-Enhanced Generative Adversarial Networks (LTN-GANs) inject background knowledge by grounding each logical axiom as a predicate and training the generator to raise its satisfaction, a fuzzy truth value in $[0,1]$. Previous LTN-GAN work grounded every constraint this way, at the predicate level, and improved constraint satisfaction. A predicate, however, only scores a sample, so it cannot embed hard structural constraints, rules such as orderings, positivity, and definitional identities that must hold in every generated sample. In this work, we investigate grounding each axiom as a function symbol inside the LTN framework. We compare against the state-of-the-art alternative, a constraint layer that clamps each violating sample onto the feasible boundary and so produces outputs that are always valid. Our investigation shows that a valid sample is not always a realistic one. An inequality is not merely satisfied or violated. It holds by a margin, and a faithful generator should also reproduce the margin's real distribution. We find that the resolution ratio $R$, the data's scale over the margin's spread, is a diagnostic, computable before training, of which constraints a chosen grounding can learn. When $R$ is large, the predicate receives no learning signal, the clamp pushes every sample onto the boundary, and the margin distribution is lost while every standard metric still looks fine. A function symbol avoids both failures, computing the constrained variable rather than scoring it. Together the function symbols form a chart, a coordinate system inside the feasible region, where every sample is valid by construction and the margin is learned like any other quantity. This recovers the margin distribution on four high-resolution datasets, with Kolmogorov--Smirnov distance up to $25\times$ smaller than the constraint layer's, and a hybrid of charting and clamping matches or exceeds the constraint layer on its own benchmark. 
\end{abstract}

\section{Introduction}

Deep generative models are a standard tool for producing synthetic tabular and
scientific data, used to augment scarce records, share sensitive data, and
propose candidate designs. Adversarial and variational generators such as CTGAN and TVAE
\citep{xu2019modeling}, score-based and diffusion models
\citep{kim2023stasy,kotelnikov2023tabddpm}, and relational-structure models
\citep{liu2022goggle} learn increasingly accurate approximations of the distributions of data
such as trip records, flight records, and molecular property profiles.

However, approximating the data distribution well is not enough, because such data obey known
rules that an approximation can still violate. For example, a synthetic trip record must have
its drop-off after its pick-up, a flight must report its departure as schedule plus delay, and
a molecule's internal energy must rise from $0$\,K ($U_0$) to room temperature ($U$) and
remain below its enthalpy ($H$), so that $U_0<U<H$. \emph{Constrained generation}
therefore asks for samples that are both realistic and provably valid under such rules, which in
tabular and scientific data are typically linear orderings between properties, positivity
requirements, and definitional identities. Two main families of
methods inject this knowledge into a generator. The first adds a penalty to
the training loss whenever a sample violates a constraint. Such a constraint is called
\emph{soft}, since the generator can trade it against the rest of the loss, and satisfaction
is encouraged but never guaranteed. The Logic Tensor Network-Enhanced
Generative Adversarial Network (LTN-GAN) \citep{upreti2026ltngan} \emph{grounds} each
constraint as a \emph{predicate}, a differentiable function that scores how well a sample
satisfies it, and trains the generator to raise this score alongside its adversarial
objective \citep{badreddine2022ltn}. The second family builds the constraints into the model,
so that a violating sample cannot be produced. Such a constraint is called \emph{hard},
since no output can break it. The differentiable \emph{constraint
layer} of \citet{stoian2024cdgm}, which turns a generator into a Constrained Deep Generative
Model (C-DGM) and is the state of the art for tabular data, moves each violating sample
into the feasible region by clamping and guarantees $100\%$ validity for any conjunction
of linear inequalities.

Our starting observation is that validity alone does not make constrained data realistic. A
valid sample satisfies each inequality by some amount, and validity ignores how
these amounts are distributed. In a trip record, the time from pick-up to drop-off is the
trip's duration. A generator can place every drop-off after its pick-up yet produce durations
unlike those of real trips. The energy step $U-U_0$ can fail the same way. We call each such amount
(drop-off minus pick-up, or $U-U_0$) the constraint's \emph{margin}, and we call its distribution over the real data the
\emph{distribution of the constraint margin}, which a faithful generator must reproduce.
Whether a generator can reproduce this distribution depends on the \emph{resolution ratio} $R$, the
data's scale over the margin's spread, computed before training. When
$R$ is small, the margin is visible at the data's scale, so an unconstrained generator places it,
the satisfaction score has usable gradient, and a clamp rarely fires. When $R$ is large,
the margin is invisible at the data's scale. For thermochemical energies the
step $U-U_0$ is five orders of magnitude smaller than the energies themselves. In this
regime the discriminator can no longer resolve the margin, the satisfaction score has
vanishing gradient,
and the generator violates the constraint on a constant fraction of samples. Trained in the
loop, the constraint layer moves nearly every sample onto the boundary, collapsing the margin
distribution to a point mass while validity reads $100\%$ and every per-feature statistic
still matches the real data (margin Kolmogorov--Smirnov (KS) distance near $1$; Figure~\ref{fig:margins}). High-$R$ constraints are common in scientific data, yet no
standard metric detects the collapse.

We show that the LTN framework already contains the mechanism to avoid this collapse. A constraint
can also be grounded through a \emph{function symbol}, a term that
computes the constrained variable directly. For an ordering $b>a$, the generator emits a free value for $a$ and
assembles $b$ from $a$ by adding a positive, smoothly parameterised increment. Every sample
satisfies the constraint \emph{by construction}, and the
margin becomes a coordinate that the
discriminator can resolve and shape at unit scale. The function symbols form a
\emph{chart} of the feasible region, a coordinate system in which the generator
produces samples inside the region rather than on its boundary. The chart's per-variable
admissible intervals are exactly those the constraint layer computes
by Fourier-Motzkin reduction, and the constraint layer becomes the
special case that always clamps. We call the result the
\emph{FSG-LTN-GAN}, for \emph{function-symbol grounding} (FSG) in LTN-GANs.
Since charting helps on high-$R$ continuous constraints while low-$R$ and discrete margins are
better clamped, we use a short pre-run to decide per constrained variable, yielding a \textbf{hybrid}
model. We show that $R$ acts as a condition
number for the constrained density-estimation problem (Section~\ref{sec:method}).

\paragraph{Contributions.} (i) We show that hard-constrained generation can distort the
\emph{distribution of the constraint margin} while passing every standard metric, and that the
resolution ratio $R$ acts as a condition number that predicts this failure before training.
(ii) We develop \emph{function-symbol grounding}, a change of coordinates that generates inside
the feasible region with exact validity and subsumes the constraint layer as its
clamp-everything special case, and extend it to a per-constraint \emph{hybrid}. (iii) We
demonstrate that function-symbol grounding cuts the margin KS by up to $25\times$ at equal
validity on four real high-resolution datasets, that the hybrid matches or exceeds the
constraint layer on the benchmark of \citet{stoian2024cdgm}, that the $R$-based prediction holds out
of sample (nycflights13), and that the method transfers unchanged to
CTGAN and TVAE backbones.

\section{Background: Logic Tensor Networks and LTN-GANs}\label{sec:background}

\paragraph{Real Logic and grounding.} Logic Tensor Networks (LTN) interpret a first-order
language, Real Logic, in real-valued tensors \citep{badreddine2022ltn}. A \emph{grounding}
$\mathcal{G}$ assigns meaning to symbols, and every term becomes a tensor. Each
\emph{predicate} $P$ becomes a map $\mathcal{G}(P)$ into the truth interval $[0,1]$ that
\emph{scores} its arguments. Each \emph{function symbol} $f$ of arity $k$ becomes a real map
$\mathcal{G}(f):\mathbb{R}^{Dk}\!\to\!\mathbb{R}^D$,
fixed or learnable, that \emph{computes} a term. A predicate acts on learning only through the truth values it contributes,
while a function symbol shapes the object being scored. Logical connectives become fuzzy
operators (a t-norm for $\wedge$, its dual for $\vee$, a fuzzy implication), and quantifiers
become aggregations ($\forall$ as a generalized mean over errors). The truth value of a
closed formula under $\mathcal{G}$ is its \emph{satisfaction} $\mathrm{Sat}\in[0,1]$, with
$1$ fully true. An \emph{axiom} is a closed formula asserted to hold of the domain, a
knowledge base $\mathrm{KB}$ is a finite set of axioms, and learning maximizes their
aggregated satisfaction $\mathrm{Sat}(\mathrm{KB})$.

\paragraph{The LTN-GAN objective.} A GAN couples a generator $G_\theta:\mathcal{Z}\!\to\!
\mathbb{R}^D$, mapping latent noise $\zeta\sim p_\zeta$ (a
standard Gaussian) to a sample, and a discriminator
$D_\psi:\mathbb{R}^D\!\to\![0,1]$, trained to score real samples near $1$
and generated ones near $0$, while $G_\theta$ is trained to fool it. An
LTN-GAN treats the generated sample as the grounding of the constrained variables and trains
the generator to also satisfy a knowledge base, using the objective
$\mathcal{L}_G=\mathcal{L}_{\mathrm{adv}}(G_\theta,D_\psi)+\lambda\,(1-\mathrm{Sat}(\mathrm{KB}))$,
whose axioms encode the constraints.

\paragraph{Predicate grounding of a constraint.} The standard generator-side LTN-GAN (G-LTN-GAN) \citep{upreti2026ltngan} grounds an ordering axiom
$b>a$ as a predicate, $\mathcal{G}(P_{>})(a,b)=\sigma\!\bigl((b-a)/s\bigr)$ with $\sigma$ the
logistic, at a band $s$ (a width hyperparameter),
and the term $\lambda(1-\mathrm{Sat})$ in the objective pushes samples toward
$b>a$. This grounding is \emph{soft}. It encourages, but does not guarantee, satisfaction, and
its gradient is informative only where the predicate is unsaturated. Section~\ref{sec:method}
shows this band covers a fraction $\Theta(1/R)$ of samples, so the usable gradient vanishes as $R$ grows (RQ4). Our method keeps the
LTN-GAN objective but regrounds each structural axiom through a function symbol. The
constrained variable becomes a term that $\mathcal{G}(f)$ computes rather than a free output
that a predicate scores, so $\mathrm{Sat}=1$ holds by construction. \emph{Soft} thus refers
to the satisfaction signal, since function-symbol grounding also uses smooth links yet
satisfies the axiom for every output.

\section{Problem Statement}\label{sec:problem}

\begin{wrapfigure}[21]{r}{0.44\linewidth}
\makeatletter\long\def\@makecaption#1#2{\vskip 4pt #1: #2\par}\makeatother
\vspace{-0.9\baselineskip}
\centering
\begin{tikzpicture}[
  >={Latex[length=1.8mm]}, font=\small,
  box/.style={draw, rounded corners=3pt, align=center, inner sep=3pt, minimum height=6mm},
  gen/.style={box, fill=oiBlue!10, draw=oiBlue},
  grnd/.style={box, fill=oiGreen!14, draw=oiGreen!70!black},
  disc/.style={box, fill=oiPurple!16, draw=oiPurple!85!black},
  data/.style={box, fill=black!4, draw=black!45},
  cons/.style={box, fill=oiOrange!22, draw=oiOrange!85!black, font=\footnotesize},
  lab/.style={font=\footnotesize, inner sep=1.5pt, fill=white}]
  \node[gen] (gen) at (-0.6,0) {Generator $G_\theta$};
  \node (z) at (-2.35,0) {$\zeta$};
  \node[cons] (pi) at (2.05,0) {constraints $\Pi$};
  \node[grnd] (phi) at (0,-1.4) {Function-symbol grounding $\varphi$\\[1pt]
    \footnotesize Fourier-Motzkin bounds $[\ell_i,u_i]$\\[1pt]
    \footnotesize e.g.\ $x_i=\ell_i+\mathrm{softplus}(z_i)$};
  \node[grnd] (x) at (0,-2.65) {\textbf{valid sample} $x$,\ $\mathrm{Sat(KB)}=1$};
  \node[disc] (disc) at (-0.9,-3.9) {Discriminator $D_\psi$\\[1pt]\footnotesize real\,/\,fake};
  \node[data] (real) at (2.0,-3.9) {real data $x$};
  \draw[->] (z)--(gen);
  \draw[->] (gen.south)--(gen.south|-phi.north) node[lab,midway]{$z$};
  \draw[->] (pi.south)--(pi.south|-phi.north);
  \draw[->] (phi)--(x);
  \draw[->] (gen.west)--++(-1.05,0)|-(disc.west) node[lab,pos=0.25]{$z$};
  \draw[->] (real.south) --++(0,-0.3) -| (disc.south);
  \node[lab] at (1.1,-4.55) {\scriptsize $\varphi^{-1}(x)$};
\end{tikzpicture}
\caption{Overview of the FSG-LTN-GAN. The grounding map $\varphi$ assembles valid samples from
the generator's free coordinates $z$ within the Fourier-Motzkin bounds of $\Pi$, and the
discriminator operates in the chart coordinates. The hybrid instead clamps low-$R$ and
discrete variables in the loop (Appendix~\ref{app:alg}).}
\label{fig:method}
\end{wrapfigure}
Let $p_X$ be an unknown distribution over $X\in\mathbb{R}^{D}$ and $\mathcal{D}$ a dataset of $N$
i.i.d.\ samples. A \emph{sample} is a vector $x=(x_1,\dots,x_D)$ whose scalar components
$x_k\in\mathbb{R}$ are its \emph{features}. A generative model with parameters $\theta$ (for us
the generator $G_\theta$) induces the distribution $p_\theta$ of
its output $G_\theta(\zeta)$, $\zeta\sim p_\zeta$, and learning chooses $\theta$ so that
$p_\theta\approx p_X$. The background
knowledge is a finite set $\Pi$ of linear-inequality axioms over the features
$\{x_1,\dots,x_D\}$, each of the form $\sum_k w_k x_k + b \trianglerighteq 0$ with
$\trianglerighteq\in\{\ge,>\}$, real coefficients $w_k$, and offset $b$, following the
formulation of \citet{stoian2024cdgm}. A sample $\tilde x$ \emph{satisfies}
$\phi\in\Pi$ if $\sum_k w_k\tilde x_k+b\trianglerighteq 0$. A generator is \emph{compliant}
(valid) if all its samples satisfy all of $\Pi$. Orderings ($x_i>x_j$), positivity ($x_i>0$),
and definitional identities ($x_i=\sum_j w_jx_j$, encoded as two inequalities) are the
\emph{structural} fragment we study.

\paragraph{The margin and its distribution.} For $\phi:\sum_k w_kx_k+b\ge 0$, its \emph{margin}
on a sample is $m_\phi(x)=\sum_k w_kx_k+b$, with $m_\phi\ge 0$ exactly when $\phi$ holds. The right target is the \emph{distribution of the constraint margin}, the
conditional distribution of the margin under the data, $p_X(m_\phi\mid m_\phi\ge 0)$. We
quantify this by the Kolmogorov--Smirnov (KS) distance between the generated and real constraint
margins on a fixed reference sample of the real data.

\paragraph{The resolution ratio.} The \emph{spread}
$\sigma_m=\mathrm{std}_{\mathcal{D}}(m_\phi)$ is the standard deviation of $\phi$'s margin
over the dataset. The \emph{scale} $\sigma_s=\max_{k:\,w_k\neq0}\mathrm{std}_{\mathcal{D}}(x_k)$
is the largest standard deviation among the features $\phi$ relates: for $\phi:U-U_0\ge0$ it
is $\mathrm{std}(U)$. Both are computed on the raw, unstandardized data,
before training. The \emph{resolution ratio} of $\phi$ is $R_\phi=\sigma_s/\sigma_m$. When $R_\phi$ is
large the margin is a tiny difference of large near-equal quantities. In standardized
coordinates it occupies a band of relative width about $1/R_\phi$, sub-resolution to a Lipschitz
discriminator, so a \emph{free} generator (one trained with no constraint mechanism) that has matched the marginals still violates $\phi$ on a
constant fraction of samples (RQ4). Section~\ref{sec:method} casts $R_\phi$ as a scaling
condition number for recovering the distribution of the constraint margin.

\paragraph{The constraint layer (clamping).} C-DGM \citep{stoian2024cdgm} appends a differentiable
\emph{constraint layer} (CL): given a
variable ordering, it computes for each variable an admissible interval $[\ell_i,u_i]$ (piecewise-linear in
the already-set variables, by Fourier-Motzkin reduction) and \emph{clamps} the generated value
into it, $\mathrm{CL}(\tilde x)_i=\min(\max(\tilde x_i,\ell_i),u_i)$, guaranteeing validity.
A clamp moves a violating sample to the nearest boundary, where the
margin is near zero. Section~\ref{sec:theory} shows this is exactly where the
distribution of the constraint margin is lost, at a rate governed by $R$.

\section{Function-Symbol Grounding as a Change of Coordinates}\label{sec:method}

\emph{Change of coordinates} and \emph{chart} carry their differential-geometric meaning
throughout. Inside the affine subspace its identities define, the feasible set
$\mathcal{M}=\{x:\bigwedge_{\phi}\phi(x)\}$ of a satisfiable linear system is a relatively
open convex polytope, diffeomorphic to $\mathbb{R}^{d}$ and covered by a single global chart.
The map $\varphi$ below is such a chart, with one coordinate per free variable.

We ground each structural axiom through a
function symbol (Figure~\ref{fig:method}). The generator emits free terms
$z\in\mathbb{R}^{d}$ (one per variable not fixed by an identity), and a grounded map $\varphi$ assembles the
constrained sample by processing the variables in the Fourier-Motzkin order. With
$\Pi_i^{+}$ ($\Pi_i^{-}$) the reduced constraints in which $x_i$ has positive (negative)
coefficient \citep{stoian2024cdgm}, the admissible interval of $x_i$ is piecewise-linear in
the already-assembled $x_{<i}$,
\begin{equation}\label{eq:bounds}
\ell_i(x_{<i})=\max_{\phi\in\Pi_i^{+}}\varepsilon_i^{\phi}(x_{<i}),\quad
u_i(x_{<i})=\min_{\phi\in\Pi_i^{-}}\varepsilon_i^{\phi}(x_{<i}),\quad
\varepsilon_i^{\phi}=-\Bigl(\textstyle\sum_{k<i}w_{k}x_{k}+b\Bigr)/w_{i},
\end{equation}
the bounds the constraint layer clamps into (Section~\ref{sec:problem}). Per variable,
$\varphi$ applies the grounding that fits the interval:
\begin{align*}
\text{one-sided: } x_i&=\ell_i+\mathrm{softplus}(z_i)\ \text{ or }\ u_i-\mathrm{softplus}(z_i),
&\text{box: } x_i&=\ell_i+(u_i-\ell_i)\,\sigma(z_i),\\
\text{free: } x_i&=z_i,
&\text{identity: } x_i&=\textstyle\sum_{j<i} w_{ij}\,x_j+w_{i0}.
\end{align*}
(For Alchemy's $U_0<U<H$: $U_0=z_1$, $U=U_0+\mathrm{softplus}(z_2)$,
$H=U+\mathrm{softplus}(z_3)$.)
Each bounded variable is a smooth, monotone function of a unit-scale coordinate $z_i$ that stays
within its admissible interval, and each identity derives its dependent variable, so the axiom
holds by construction ($\mathrm{Sat}(\mathrm{KB})=1$, hence the logical term in
the objective vanishes for these axioms). The decoder
$\varphi:z\mapsto x$ is the \emph{chart} of $\mathcal{M}$ defined above, turning unconstrained
coordinates $z$ into feasible samples $x$. It inverts in closed form (for a one-sided variable,
$z_i=\mathrm{softplus}^{-1}(x_i-\ell_i)$), which gives the encoder $\varphi^{-1}$ applied to
real data. The discriminator operates on the chart coordinates
$z$, where every margin is unit scale, receiving $z$ for generated samples and
$\varphi^{-1}(x)$ for real ones, so a standard discriminator can learn the distribution of the
constraint margin directly. For heavy-tailed margins we
ground through $\exp$ rather than softplus (a multiplicative increment), set by a fixed
dynamic-range rule (Appendix~\ref{app:hparams}). Monotone softplus links enforcing order and positivity go back to
\citet{dugas2009incorporating}. Function-symbol grounding deploys them as the groundings of
structural axioms inside adversarial training.

\begin{proposition}[Validity by construction]\label{thm:valid}
For any satisfiable finite set $\Pi$ of linear inequalities and any generator output $z$, the
assembled sample $\varphi(z)$ satisfies $\Pi$.
\end{proposition}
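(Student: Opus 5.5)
The proof is an induction along the Fourier--Motzkin variable order used to build $\varphi$. It rests on the standard property of Fourier--Motzkin elimination that \citet{stoian2024cdgm} also use for the constraint layer. Write $\Pi^{(i)}$ for the reduced system obtained by eliminating $x_D,\dots,x_{i+1}$, so that $\Pi^{(i)}$ involves only $x_{\le i}$ and $\Pi^{(D)}=\Pi$. The property has two parts. First, $\Pi^{(i-1)}$ is exactly the set of constraints of $\Pi^{(i)}$ that do not involve $x_i$, together with the pairwise combinations $\ell_i\le u_i$ of its constraints in $\Pi_i^{+}$ and $\Pi_i^{-}$. Second, a partial assignment $x_{<i}$ satisfies $\Pi^{(i-1)}$ exactly when some $x_i$ in $[\ell_i(x_{<i}),u_i(x_{<i})]$ exists, and then every such $x_i$ extends it to a solution of $\Pi^{(i)}$. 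Satisfiability of $\Pi$ ensures the chain terminates in a consistent $\Pi^{(0)}$: either empty, or trivially true constant inequalities.

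The induction hypothesis is that the partially assembled vector $x_{<i}$ satisfies $\Pi^{(i-1)}$. The base case $i=1$ holds because $\Pi^{(0)}$ is consistent. For the step, Eq.~\eqref{eq:bounds} together with the hypothesis gives $\ell_i(x_{<i})\le u_i(x_{<i})$. I then check each grounding case.
\begin{itemize}
\item \emph{Free.} Here $\Pi_i^{\pm}$ are both empty, so any $z_i$ is admissible.
\item \emph{One-sided.} With only a lower bound, $x_i=\ell_i+\mathrm{softplus}(z_i)>\ell_i$ because softplus is strictly positive. This gives every $\ge$ or $>$ constraint in $\Pi_i^{+}$, since $x_i$ strictly exceeds each $\varepsilon_i^\phi$ after dividing by $w_i>0$. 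The upper-bound case is symmetric, with the sign flip from $w_i<0$.
\item \emph{Box.} Since $\sigma(z_i)\in(0,1)$, we get $\ell_i<x_i<u_i$ whenever $\ell_i<u_i$.
\item \emph{Identity.} The dependent variable is set to the unique admissible value, $\ell_i=u_i$, so both encoding inequalities hold with equality.
\end{itemize}
Each constraint of $\Pi^{(i)}$ that involves $x_i$ is equivalent to $x_i\ge\varepsilon_i^\phi$ or $x_i\le\varepsilon_i^\phi$. Every constraint that does not involve $x_i$ is already in $\Pi^{(i-1)}$. Hence $x_{\le i}$ satisfies $\Pi^{(i)}$, and at $i=D$ the assembled sample $\varphi(z)$ satisfies $\Pi$.

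The main obstacle is the degenerate box case. Suppose the satisfiable system forces $\ell_i(x_{<i})=u_i(x_{<i})$ on some slice without that equality being declared as an identity, or suppose a strict constraint meets a non-strict one at equality. The open maps softplus and $\sigma$ produce strictly interior values, so strict inequalities are fine whenever the relative interior is nonempty. I would handle this using the paper's setup: identities are routed to the identity grounding, and $\mathcal M$ is taken relatively open inside the affine hull of the identities. I would therefore first reduce $\Pi$ by substituting the identities, and separate out the implicit equalities, which can be found by linear programming and are then treated as identities. For the remaining system, every box satisfies $\ell_i<u_i$ on the relative interior. The induction hypothesis should accordingly be strengthened to \emph{strict} satisfaction of the non-equality constraints of $\Pi^{(i-1)}$. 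This strict version propagates because softplus and $\sigma$ land in open intervals.
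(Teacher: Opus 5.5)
Your proof is correct and is the paper's own argument made explicit. Both run an induction along the Fourier--Motzkin order, get non-emptiness of each $[\ell_i,u_i]$ from Fourier--Motzkin completeness, split into free, one-sided, box and identity cases, and route implied (pinched) equalities to the identity branch; your induction hypothesis on the projected systems $\Pi^{(i-1)}$ spells out what the paper leaves implicit.

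One small remark: if you track strictness in the Fourier--Motzkin combinations (a derived constraint is strict when either parent is), the degenerate box case is already harmless for validity. There $\ell_i+(u_i-\ell_i)\sigma(z_i)=\ell_i$, and no strict constraint can attain $\ell_i=u_i$ when $x_{<i}$ satisfies $\Pi^{(i-1)}$. So your strict strengthening is needed only for interior placement, which is also how the paper frames it.
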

\noindent The proof (Appendix~\ref{app:proof}) is the soundness of Fourier-Motzkin elimination.
In the elimination order each variable's admissible interval is non-empty given the finalized
earlier variables, and softplus and $\sigma$ map $\mathbb{R}$ into its interior, so every axiom
is met. Validity is exact, as for the constraint layer. The two differ only in \emph{where
in the interval} the sample lands, and that is what the distribution of the constraint margin captures.

\subsection{Why the chart can preserve the distribution of the constraint margin and the clamp cannot}\label{sec:theory}
For $\phi:b>a$, the clamp sends every violator to $b=a$, while
function-symbol grounding emits the margin as a unit-scale coordinate,
$m_\phi=\mathrm{softplus}(z)$. Remark~\ref{rem:cond} casts $R$ as a condition number, and
Proposition~\ref{thm:collapse} quantifies both mechanisms.

\begin{remark}[$R$ as a condition number]\label{rem:cond}
Recovering the distribution of the constraint margin of $\phi$ requires resolving a margin of scale
$\sigma_m=\mathrm{std}(m_\phi)$ inside data of scale $\sigma_s=R\,\sigma_m$. A Lipschitz
discriminator on the ambient coordinates must resolve a relative magnitude $1/R$. We describe this
as a condition number $\Theta(R)$ in the scaling sense, versus $\Theta(1)$ in the chart,
where the margin is standardized.
\end{remark}

\begin{corollary}[Definitional identities are measure-zero]\label{cor:identity}
Let $\mathcal{M}_{=}=\{x\in\mathbb{R}^{D}:c(x)=0\}$ be the zero set of the identities, with
$c:\mathbb{R}^{D}\to\mathbb{R}^{p}$ a $C^{1}$ map whose Jacobian has full rank $p$ on
$\mathcal{M}_{=}$. Then $\mathcal{M}_{=}$ is Lebesgue-null, so any generated distribution $\nu$ that is
absolutely continuous has $\Pr_{x\sim\nu}[c(x)=0]=0$, and no satisfaction loss can raise exact
satisfaction above probability $0$. Function-symbol grounding derives the dependent variables
from their parents, so $c\equiv0$ holds with probability $1$.
\end{corollary}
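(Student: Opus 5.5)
The plan splits the corollary into its three claims: $\mathcal{M}_{=}$ is Lebesgue-null, no loss can fix this, and function-symbol grounding gives $c\equiv 0$.

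\emph{Nullity.} Since $Dc$ has full rank $p\ge 1$ at every point of $\mathcal{M}_{=}$, the regular value theorem (equivalently, the implicit function theorem) applies at each $x^\ast\in\mathcal{M}_{=}$. It gives an open neighbourhood $V_{x^\ast}$ in which, after permuting coordinates, $\mathcal{M}_{=}\cap V_{x^\ast}$ is the graph of a $C^1$ map from $D-p$ coordinates to the remaining $p$. A graph over $\mathbb{R}^{D-p}$ has Lebesgue measure zero in $\mathbb{R}^D$ by Fubini: each fibre over the free coordinates is a single point, which is null in $\mathbb{R}^p$. The sets $V_{x^\ast}$ cover $\mathcal{M}_{=}$. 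Since $\mathbb{R}^D$ is second countable, a countable subcover exists, and $\mathcal{M}_{=}$ is a countable union of null sets, hence null. For the structural fragment the identities are affine, so $\mathcal{M}_{=}$ is a proper affine subspace. In that case nullity is immediate, and the general $C^1$ argument only covers the stated generality.

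\emph{Consequence for generators.} If $\nu\ll\lambda^D$, then $\lambda^D(\mathcal{M}_{=})=0$ forces $\nu(\mathcal{M}_{=})=\Pr_{x\sim\nu}[c(x)=0]=0$. A satisfaction loss only changes $\theta$, so every distribution it can reach is again absolutely continuous, and exact satisfaction stays at probability $0$. A smooth predicate such as $\sigma(-|c|/s)$ can approach $\mathrm{Sat}\to 1$ in value while the exact event keeps probability zero.

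I expect this to be the main obstacle. The corollary's hypothesis is that the generated distribution is absolutely continuous, so the claim holds conditionally on that. It is not automatic: a network with latent dimension below $D$, or a degenerate ReLU network, can push $p_\zeta$ onto a lower-dimensional set, and such a set could in principle lie inside $\mathcal{M}_{=}$. The proof must therefore say that the statement ranges over absolutely continuous $\nu$ only, and note that the free generators in question, with full-dimensional noise and generic weights, have outputs of this type.

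\emph{Grounding side.} Order the variables so that each dependent $x_i$ is assembled by the identity term $x_i=\sum_{j<i}w_{ij}x_j+w_{i0}$, as in the construction of $\varphi$. For the structural fragment, $c_i(x)=x_i-\sum_{j<i}w_{ij}x_j-w_{i0}$, so $c(\varphi(z))=0$ identically in $z$. This is the identity case of Proposition~\ref{thm:valid}, where each identity is read as its pair of inequalities. Hence $\Pr[c(x)=0]=1$ for every generator output distribution. The corresponding $\nu$ is singular and supported on $\mathcal{M}_{=}$, which matches the chart having only $d=D-p$ coordinates.
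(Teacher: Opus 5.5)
Your proposal is correct and follows the paper's proof. The paper's steps are: the regular-value theorem makes $\mathcal{M}_{=}$ an embedded $(D-p)$-dimensional $C^1$ submanifold, hence Lebesgue-null; then $\nu(\mathcal{M}_{=})=0$ whenever $\nu\ll\mathrm{Leb}$; and $c(\varphi(z))\equiv 0$, so the grounded distribution is singular and carried by $\mathcal{M}_{=}$. You take the same route and also spell out the local-graph, Fubini and countable-subcover argument that the paper leaves implicit.

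One small fix: drop the sentence claiming that every distribution a satisfaction loss can reach is again absolutely continuous. It is false in general, since a linear output layer can align exactly with an affine identity. Keep only your later reading, which matches the paper's: the claim is conditional on $\nu\ll\mathrm{Leb}$.
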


\begin{corollary}[Predicate-grounded ordering: $\Theta(1/R)$ gradient]\label{cor:ordering}
Ground an ordering $\phi:b>a$ as the predicate $P_{s}=\sigma\!\big((b-a)/s\big)$ with band
$s=\Theta(\sigma_{m})$. If the generator has matched the marginals of $a$ and $b$ but not their
dependence (margin correlation bounded away from $1$), its margin has spread
$\Theta(R\,\sigma_{m})$ and density $\Theta(1/\sigma_{s})$ near $b=a$ (no anomalous
concentration). The predicate gradient $P_{s}(1-P_{s})/s$ is $\Theta(1/s)$ on the band
$|b-a|\lesssim s$ and exponentially small outside it, so the fraction of generated samples with
usable gradient is $\Theta(1/R)$, vanishing as $R\to\infty$.
\end{corollary}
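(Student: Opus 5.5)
The plan is to work directly with the law of the generated margin $M=b-a$ and to integrate the predicate gradient against that law. The proof has three steps: get the margin's spread, bound the density near $0$, and then split the gradient integral into the band and its complement.

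\emph{Step 1: spread.} Write $\mathrm{Var}(M)=\mathrm{Var}(a)+\mathrm{Var}(b)-2\rho\,\mathrm{sd}(a)\,\mathrm{sd}(b)$. Matched marginals give $\mathrm{sd}(a),\mathrm{sd}(b)\le\sigma_s$, and the larger of the two equals $\sigma_s$.
\begin{itemize}
\item Upper bound: $\mathrm{Var}(M)\le 4\sigma_s^2$.
\item Lower bound: suppose the generator's correlation $\rho$ satisfies $\rho\le 1-c$ for a fixed $c>0$. Then $\mathrm{Var}(M)\ge (\mathrm{sd}(b)-\mathrm{sd}(a))^2+2c\,\mathrm{sd}(a)\,\mathrm{sd}(b)$.
\item In the regime of interest $\mathrm{sd}(a)$ and $\mathrm{sd}(b)$ are comparable: both are of order $\sigma_s$ and differ by at most $\sigma_m$.
\end{itemize}
Together these give $\mathrm{sd}(M)=\Theta(\sigma_s)=\Theta(R\sigma_m)$. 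I would state the comparability of the two marginal scales explicitly as part of the hypothesis. It is exactly what makes $R$ large in the first place.

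\emph{Step 2: density near zero.} The statement already assumes the generated margin density $f_M$ satisfies $f_M(t)=\Theta(1/\sigma_s)$ for $|t|\lesssim s$. This is the "no anomalous concentration" clause, and I take it as a hypothesis rather than derive it. A derivation would need regularity of the generator's pushforward, for example a density bounded above and below on a neighbourhood of the boundary. I would flag this as the main obstacle: without some such condition, a generator could put an atom or a spike at $b=a$ and break the claim. I would justify the hypothesis only heuristically, e.g. for an approximately Gaussian $M$ with spread $\Theta(\sigma_s)$ and $|\mathbb{E}M|=O(\sigma_s)$.

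\emph{Step 3: gradient fraction.} Let $g(t)=P_s(1-P_s)/s=\sigma'(t/s)/s$. On $|t|\le s$ we have $\sigma'(t/s)\ge\sigma'(1)$, so $g=\Theta(1/s)$ there. For $|t|\ge ks$ we have $\sigma'(t/s)\le e^{-k}$, so $g$ is exponentially small. Call a sample's gradient usable when $g\ge\eta/s$ for a fixed constant $\eta$. The usable set is then $\{|t|\le k_\eta s\}$ for some constant $k_\eta$. Its probability is
\[
\int_{-k_\eta s}^{k_\eta s} f_M(t)\,dt=\Theta\!\left(\frac{s}{\sigma_s}\right)=\Theta\!\left(\frac{\sigma_m}{R\sigma_m}\right)=\Theta(1/R),
\]
using Step 2 and $s=\Theta(\sigma_m)$.

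As a complementary expected-gradient bound, $\mathbb{E}[g(M)]=\int g f_M\le \sup_{|t|\le Ks} f_M\cdot\int g+\text{tail}$. Since $\int g(t)\,dt=1$ and the tail is exponentially small, this gives $\mathbb{E}[g(M)]=\Theta(1/\sigma_s)$. That is a factor $1/R$ below the scale $1/s$ that an in-band sample sees. The conclusion follows as $R\to\infty$.
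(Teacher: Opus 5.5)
Your proposal is correct and follows the paper's own argument. You differentiate the logistic to get $\sigma'((b-a)/s)/s$, observe that it is $\Theta(1/s)$ on a band of width $\Theta(s)=\Theta(\sigma_m)$ and exponentially suppressed outside it, and integrate the $\Theta(1/\sigma_s)$ density over that band to get $\Theta(\sigma_m/\sigma_s)=\Theta(1/R)$, taking the density clause as a hypothesis just as the paper does.

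Your Step~1 variance bound is a useful addition that the paper only asserts. The comparability you propose to add as a hypothesis is automatic: on the data $|\mathrm{sd}(a)-\mathrm{sd}(b)|\le\mathrm{sd}(b-a)=\sigma_m$, and matched marginals carry this over to the generator. In your optional expected-gradient remark, a fixed-$K$ tail estimate of $e^{-K}/s$ does not give $O(1/\sigma_s)$, so either take $K\gtrsim\log R$ or use a global density bound, $\mathbb{E}[g(M)]\le\sup_t f_M(t)$ since $\int g=1$.
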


\begin{proposition}[Margin laws]\label{thm:collapse}
Let the real margin of $\phi:b>a$ have CDF $F$ supported on $(0,\infty)$, and clamp a free
generator with violation rate $v$ post hoc, with offset $\varepsilon$ below the real
support. (i) The clamped margin distribution is $v\,\delta_{\varepsilon}+(1-v)\,\nu^{+}$, with
$\nu^{+}$ the generator's satisfying-margin distribution. Under the hypotheses of
Corollary~\ref{cor:ordering}, its KS distance to $F$ is at least
$\max(v,1-v)-\Theta(1/R)\ge\tfrac12-\Theta(1/R)$, tending to $1$ as $v\to1$ (observed
in-loop; Section~\ref{sec:exp}). (ii) The chart $\varphi$ is a bijection from $\mathbb{R}^{d}$ onto the relative
interior of $\mathcal{M}$ with closed-form inverse, both smooth off the measure-zero set
where the active Fourier-Motzkin bound switches,
places no probability on the boundary, and realizes every margin distribution on $(0,\infty)$
exactly (proofs in Appendix~\ref{app:theory}).
\end{proposition}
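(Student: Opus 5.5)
Part (i) splits into a distributional identity and a KS lower bound; part (ii) is a per-coordinate induction in the Fourier-Motzkin order, reusing the argument of Proposition~\ref{thm:valid}.

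\textbf{Part (i): the clamped law.} I would start from the law of total probability on the event $\{m<0\}$. With probability $v$ the free generator violates $\phi$, and the clamp then sends $b$ to $a+\varepsilon$, so the margin equals $\varepsilon$ deterministically. On the complement the sample passes through unchanged and its margin has conditional law $\nu^{+}$. This gives the mixture $v\,\delta_{\varepsilon}+(1-v)\,\nu^{+}$ directly.

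\textbf{Part (i): the KS bound.} I would evaluate $|G-F|$, with $G$ the clamped CDF, at two test points and take the larger value.

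At $t$ just above $\varepsilon$ but below the real support, $F(t)=0$. Meanwhile $G(t)\ge v$, giving KS $\ge v$.

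For the bound $1-v$, the hypotheses of Corollary~\ref{cor:ordering} give the free generator's margin spread $\Theta(R\sigma_m)$ and a density $O(1/\sigma_s)$ near $0$. Hence $\nu^{+}$ places only mass $O(\sigma_m/\sigma_s)=O(1/R)$ on any window of width $O(\sigma_m)$ near $0$. I would pick $t$ at a high quantile of $F$, say where $F(t)\ge 1-\Theta(1/R)$, with $t=O(\sigma_m)$; finite variance and Chebyshev suffice at this order, possibly with a slightly weaker rate. Then $G(t)\le v+(1-v)\,O(1/R)$, so $F(t)-G(t)\ge 1-v-\Theta(1/R)$.

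Combining the two test points gives $\max(v,1-v)-\Theta(1/R)$. The bound $\max(v,1-v)\ge\tfrac12$ is trivial.

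\textbf{Main obstacle.} This is the second test point: making "$\nu^{+}$ has little mass at the $\sigma_m$ scale" quantitative. It needs the no-anomalous-concentration density bound holding uniformly on $[0,t]$, not only at $0$. I would state it as exactly that assumption, inherited from Corollary~\ref{cor:ordering}.

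\textbf{Part (ii): induction.} I would induct on the coordinates in the Fourier-Motzkin order. Given $x_{<i}$ in the relative interior of the projected polytope, the interval $(\ell_i,u_i)$ is nonempty and open, by the soundness used in Proposition~\ref{thm:valid} together with satisfiability. Each link is a bijection from $\mathbb{R}$ onto the open (half-)interval:
\begin{itemize}
\item $\mathrm{softplus}:\mathbb{R}\to(0,\infty)$;
\item $\sigma:\mathbb{R}\to(0,1)$ composed with an affine rescaling;
\item the identity map for free variables;
\item identities, which carry no coordinate.
\end{itemize}
Composing these gives the triangular map $\varphi$, a bijection onto $\mathrm{relint}\,\mathcal{M}$ with inverse $z_i=\mathrm{softplus}^{-1}(x_i-\ell_i)$ or $\mathrm{logit}\bigl((x_i-\ell_i)/(u_i-\ell_i)\bigr)$.

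\textbf{Part (ii): smoothness and boundary mass.} Smoothness holds wherever each $\ell_i,u_i$ is locally affine, which fails only where the argmax/argmin switches. That set is a finite union of hyperplane pieces, hence Lebesgue-null. The image avoids the boundary since each link maps into an open interval, so the boundary has probability $0$.

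\textbf{Part (ii): realizing any margin law.} For a target margin CDF $F$ on $(0,\infty)$, take $z$ with a continuous CDF $\Phi$ and set $z=\mathrm{softplus}^{-1}(F^{-1}(\Phi(\zeta)))$. This is the inverse-CDF transform, so $m=\mathrm{softplus}(z)\sim F$ exactly, and a sufficiently expressive generator can emit this $z$.
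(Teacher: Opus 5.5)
Your proposal follows the paper's proof essentially step for step. For (i) you derive the mixture law from the clamp's action and use the same two KS test points: at $\varepsilon$, giving $v$, and at a high quantile of $F$, where the free generator carries only $\Theta(1/R)$ mass, giving $1-v-\Theta(1/R)$; for (ii) you invert the links coordinate-wise along the Fourier-Motzkin order and push a target margin law through $\mathrm{softplus}^{-1}$. Your hedge on the second test point is warranted: with finite variance and Chebyshev alone, placing $t$ at the real margin's mean plus $k\sigma_m$ and balancing $1/k^2$ against $k/R$ gives only an $O(R^{-2/3})$ correction, so the exact $\Theta(1/R)$ needs a lighter tail on $F$, and the paper's own ``take $\alpha\to1$ with $q_\alpha=\Theta(\sigma_m)$'' step has the same looseness.
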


\subsection{Which axioms to ground through function symbols: a hybrid}\label{sec:hybrid}
Function-symbol grounding is most effective where clamping is most costly, on high-$R$ constraints the
generator cannot meet on its own. But a continuous chart cannot represent a margin with a
discrete point mass (an integer count, a two-valued category). The softplus or $\sigma$ increment
smears it, whereas the in-loop clamp, which the generator learns to anticipate, matches it
empirically (RQ3). We therefore decide per bounded
variable, over the conjunction of the reduced constraints that bound it. A short pre-run of a
free generator measures the fraction $s_i$ of its samples that already lie in variable $x_i$'s
admissible interval, and the largest single-value frequency of $x_i$'s binding margin in
$\mathcal{D}$ measures the discreteness $d_i$. We chart $x_i$ if $s_i<0.9$ and $d_i\le0.2$
(violated and continuous). Otherwise we clamp it in the loop, with the discriminator seeing the
clamped value so the generator adapts to it. The resulting generator charts the high-$R$
continuous variables and clamps the rest. Validity remains exact
(Proposition~\ref{thm:valid}) since every bounded variable is either charted or clamped. Algorithm~\ref{alg:hybrid} (Appendix~\ref{app:alg})
summarizes the procedure.

\section{Experimental Analysis}\label{sec:exp}

We answer four questions. \textbf{RQ1:} does guaranteeing validity guarantee a realistic
constrained quantity? \textbf{RQ2:} does the effect persist across generator architectures?
\textbf{RQ3:} does the hybrid generalize to the constraint-layer benchmark of \citet{stoian2024cdgm}? \textbf{RQ4:}
can predicate grounding (the standard LTN-GAN satisfaction loss) reach high-$R$ constraints
instead?

\textbf{Datasets.} We use four real high-resolution datasets, each carrying an ordering whose
margin is a small difference of large quantities: \textbf{Alchemy} \citep{chen2019alchemy}
($U_0<U<H$), \textbf{tmQM} \citep{balcells2020tmqm}, \textbf{Transition1x}
\citep{schreiner2022transition1x} (reaction barriers), and \textbf{Taxi} \citep{nyctlc} (trip
duration). Constraints and per-margin $R$ ($3.6$ to $7{\times}10^6$) are in Table~\ref{tab:dataspec},
Appendix~\ref{app:data}. For RQ3 we use the six-dataset benchmark of
\citet{stoian2024cdgm}. \textbf{Baselines.} The \emph{constraint layer}
(C-DGM) on the same MLP-GAN as FSG-LTN-GAN; \emph{CTGAN} and \emph{TVAE} \citep{xu2019modeling}
with and without it; and post-hoc \emph{projection}. \textbf{Metrics.} Validity, the \emph{margin
KS} against a fixed reference sample of the real data (Appendix~\ref{app:fair}), and per-property moment error, as means $\pm$ std over $n{=}10$
seeds with paired Wilcoxon $p$-values. Full tables and the constraint-layer corrections are
in Appendices~\ref{app:fair},~\ref{app:fullrq},~\ref{app:rq3full}, and~\ref{app:extra}.

\subsection{RQ1: validity does not imply a realistic constrained quantity}\label{sec:rq1}

\begin{table}[t]\centering
\caption{\textbf{Validity does not capture the distribution of the constraint margin (RQ1).} Four real high-resolution datasets, $n{=}10$,
mean $\pm$ std. Both methods are $100\%$ valid. Only FSG-LTN-GAN recovers the distribution of
the constraint margin, while per-property moment error (lower better) is
comparable, so the gap is invisible to it. $^{\dagger}$paired Wilcoxon $p<0.01$.}
\label{tab:rq1}
\small
\setlength{\tabcolsep}{5pt}
\begin{tabular}{lccccc}
\toprule
 & & \multicolumn{2}{c}{margin KS $\downarrow$} & \multicolumn{2}{c}{per-prop.\ moment $\downarrow$} \\
\cmidrule(lr){3-4}\cmidrule(lr){5-6}
Dataset & $R$ & C-DGM & \textbf{FSG-LTN-GAN} & C-DGM & FSG-LTN-GAN \\
\midrule
Alchemy        & $3{\times}10^4$ to $7{\times}10^6$ & $1.000\pm0.000$ & $\mathbf{0.040\pm0.007}^{\dagger}$ & $0.472\pm0.012$ & $0.327\pm0.058$ \\
tmQM           & $2.3\times10^4$  & $1.000\pm0.000$ & $\mathbf{0.045\pm0.010}^{\dagger}$ & $0.183\pm0.014$ & $0.192\pm0.025$ \\
Transition1x   & $10^3$           & $0.999\pm0.000$ & $\mathbf{0.065\pm0.010}^{\dagger}$ & $0.115\pm0.036$ & $0.196\pm0.038$ \\
Taxi           & $3.6$ to $1.1{\times}10^3$ & $0.574\pm0.005$ & $\mathbf{0.099\pm0.010}^{\dagger}$ & $0.492\pm0.024$ & $0.282\pm0.032$ \\
\bottomrule
\end{tabular}
\end{table}

\begin{figure}[t]\centering
\includegraphics[width=0.8\linewidth]{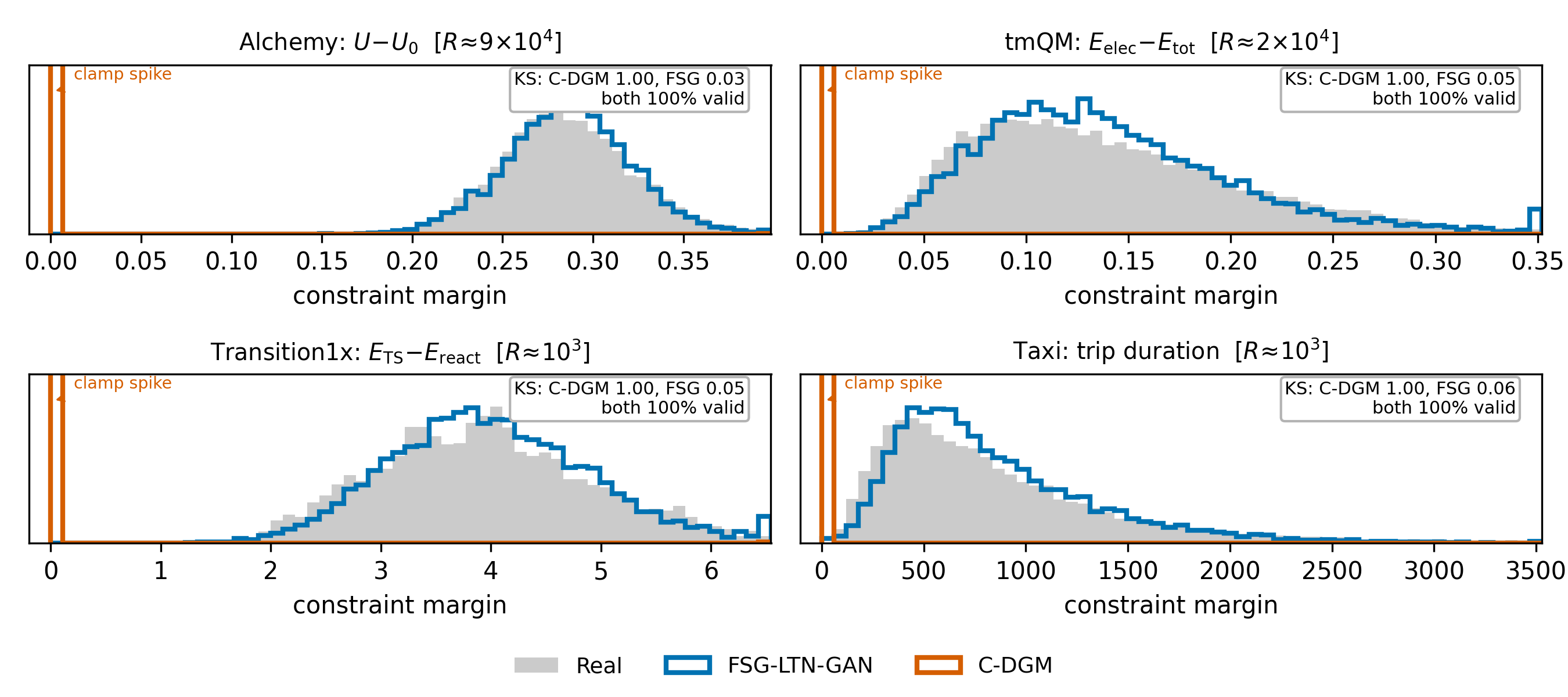}
\caption{\textbf{The distribution of the constraint margin on all four datasets.} Real margin
(grey) against C-DGM (orange, the constraint layer) and FSG-LTN-GAN (blue). All are $100\%$
valid. The clamp collapses every margin to a boundary point mass. FSG-LTN-GAN reproduces the
real distribution. The annotated KS is for the displayed high-$R$ margin. Table~\ref{tab:rq1}
averages each dataset's constraints, so its Taxi value also counts the low-$R$
$\mathrm{total}>\mathrm{fare}$ margin, which the clamp handles well.}
\label{fig:margins}
\end{figure}

Throughout, the constraint layer runs from the code released by \citet{stoian2024cdgm}, with
corrections that only help it
(including one for a latent bug in its Fourier-Motzkin reduction; Appendix~\ref{app:fair}).
Table~\ref{tab:rq1} shows that on all four datasets the constraint layer and FSG-LTN-GAN are both
$100\%$ valid, yet the constraint layer's margin KS is $0.57$ to $1.00$ while FSG-LTN-GAN's is
$0.04$ to $0.10$ ($p<0.01$, all ten seeds). Inspecting the raw margins
(Figure~\ref{fig:margins}) explains the numbers. The clamp places every sample's margin at the
boundary (about $10^{-6}$ at every quantile), a point mass disjoint from the real distribution
(the collapse of Proposition~\ref{thm:collapse}(i)). FSG-LTN-GAN's margin quantiles track the
real ones. The failure is invisible to the standard metrics we report. Per-property moment error is
comparable across methods (the constraint layer is even \emph{better} on tmQM and Transition1x),
so one can match every feature's marginal while losing the distribution of the constrained quantity. The clamp's
distortion tracks $R$ per margin (KS near $1$ on the high-$R$ margins, e.g.\ Alchemy $U-U_0$ at
$R\approx10^5$). Taxi's lower table value averages in a low-$R$ margin the clamp handles well, so
the effect tracks $R$ rather than the domain. A final, pre-registered test on a fifth
dataset outside both papers' suites (nycflights13) confirmed every prediction
(Appendix~\ref{app:extra}, Table~\ref{tab:flights}).

\paragraph{Overall sample quality is preserved.} On the
constraint layer's own density and coverage metrics \citep{naeem2020reliable}, function-symbol
grounding is close on the chemistry sets (density modestly favours the constraint layer)
and substantially exceeds it on Transition1x and Taxi,
where clamping collapses the joint distribution (coverage $0.20$ and $0.70$ against $0.05$;
Appendix~\ref{app:extra}).

\subsection{RQ2: the effect persists across the architectures we test}\label{sec:rq2}

\begin{table}[t]\centering
\caption{\textbf{Architecture generality (RQ2).} Margin KS $\downarrow$ ($n{=}10$, mean $\pm$ std)
for CTGAN$+$CL, TVAE$+$CL, the constraint layer on our MLP generator (C-DGM), and FSG-LTN-GAN.
All are $100\%$ valid. The constraint layer misses the distribution of the constraint margin on
all three generator families.}
\label{tab:rq2}
\small
\begin{tabular}{lcccc}
\toprule
Dataset & CTGAN$+$CL & TVAE$+$CL & C-DGM (MLP) & \textbf{FSG-LTN-GAN} \\
\midrule
Alchemy      & $0.674\pm0.036$ & $0.651\pm0.015$ & $1.000\pm0.000$ & $\mathbf{0.040\pm0.007}$ \\
tmQM         & $0.579\pm0.052$ & $0.517\pm0.011$ & $1.000\pm0.000$ & $\mathbf{0.045\pm0.010}$ \\
Transition1x & $0.592\pm0.040$ & $0.510\pm0.053$ & $0.999\pm0.000$ & $\mathbf{0.065\pm0.010}$ \\
Taxi         & $0.419\pm0.024$ & $0.348\pm0.029$ & $0.574\pm0.005$ & $\mathbf{0.099\pm0.010}$ \\
\bottomrule
\end{tabular}
\end{table}

Table~\ref{tab:rq2} applies the constraint layer to the two
most-cited tabular generators, CTGAN and TVAE. Both reach $100\%$ validity but leave margin
KS at $0.35$ to $0.67$, against $0.04$ to $0.10$ for FSG-LTN-GAN. The in-loop constraint layer on
our MLP generator is worse still ($0.57$ to $1.00$), consistent with the in-loop clamp removing
the generator's incentive to place the sub-resolution margin.
The method transfers across architectures the same way the failure does. Because the chart is a
change of coordinates on the data, an \emph{unmodified} CTGAN or TVAE trained in chart
coordinates and decoded through $\varphi$ is exactly valid and recovers the margin
(Appendix~\ref{app:extra}, Table~\ref{tab:backbones}).

\subsection{RQ3: the hybrid matches or exceeds the constraint layer on the Stoian et al.\ (2024) benchmark}\label{sec:rq3}

\begin{table}[t]\centering
\caption{\textbf{The constraint-layer benchmark of \citet{stoian2024cdgm} (RQ3).} Margin KS $\downarrow$,
$n{=}10$, mean $\pm$ std, all methods $100\%$ valid. $R_{\max}$ is the dataset's largest
per-constraint $R$. The hybrid outperforms the constraint layer
(paired Wilcoxon) on faults ($p{=}0.002$, the high-$R$ case), url ($p{=}0.03$), and wids
($p{=}0.004$), ties on heloc, lcld, news, and never underperforms it. FSG-all is worse than the
constraint layer everywhere but faults, showing the selective in-loop clamp is necessary.}
\label{tab:rq3}
\small
\begin{tabular}{lcccc}
\toprule
Dataset & $R_{\max}$ & C-DGM & FSG-all & \textbf{hybrid} \\
\midrule
faults & $1.8\times10^3$ & $0.745\pm0.014$ & $0.117\pm0.008$ & $\mathbf{0.117\pm0.008}$ \\
heloc  & $2$             & $0.156\pm0.012$ & $0.208\pm0.005$ & $0.158\pm0.014$ \\
lcld   & $1$             & $0.131\pm0.008$ & $0.657\pm0.004$ & $0.133\pm0.007$ \\
url    & $1$             & $0.159\pm0.010$ & $0.196\pm0.018$ & $\mathbf{0.144\pm0.021}$ \\
news   & $2$             & $0.294\pm0.008$ & $0.318\pm0.007$ & $0.297\pm0.007$ \\
wids   & $4$             & $0.140\pm0.007$ & $0.319\pm0.004$ & $\mathbf{0.133\pm0.008}$ \\
\bottomrule
\end{tabular}
\end{table}

\emph{Does the method help on the constraint layer's own benchmark? Yes, wherever $R$ is high.}
Table~\ref{tab:rq3} reports all six datasets of \citet{stoian2024cdgm},
predominantly low-$R$ tabular-ML tasks where clamping is already adequate. On the one
high-$R$ dataset, faults (bounding-box orderings on large sensor coordinates,
$R_{\max}\approx1.8\times10^3$), the hybrid charts every bounded variable, coinciding with FSG-all,
and improves ($p{=}0.002$, all ten seeds). On the low-$R$
remainder the hybrid clamps and matches the constraint layer, for three wins, three ties,
and zero losses across the benchmark, all at $100\%$ validity. The FSG-all ablation, by contrast, is worse
than the constraint layer everywhere but faults, since charting a point-mass or sub-resolution margin
smears it. The hybrid's per-constraint selection
(Section~\ref{sec:hybrid}) is therefore necessary, and $R$ predicts which datasets
it helps before training. The selection is robust to its two thresholds. Only extreme grid
corners that chart most of wids' constraints erode that dataset's win
(Appendix~\ref{app:extra}, Table~\ref{tab:threshsens}).

\subsection{RQ4: predicate grounding does not reach high-$R$ constraints}\label{sec:rq4}
Corollary~\ref{cor:ordering} bounds a predicate grounding's usable-gradient fraction
at $\Theta(1/R)$. We test three placements of the same well-scaled predicate (the
generator loss, i.e.\ G-LTN-GAN; discriminator re-weighting; discriminator-feature
augmentation). We score the per-ordering satisfaction fraction, averaged over the dataset's orderings ($n{=}10$,
mean $\pm$ std): $0.58\pm0.02$, $0.51\pm0.03$, $0.14\pm0.08$ on Alchemy, where a free
generator with no mechanism scores $0.49\pm0.03$, and $0.59\pm0.03$, $0.55\pm0.08$,
$0.29\pm0.07$ on tmQM (free generator $0.53\pm0.08$), against $1.000\pm0.000$ for
function-symbol grounding. No placement helps by more than $0.09$. The failure tracks the
coordinates, not the placement (Table~\ref{tab:rq4}).

\section{Related Work}\label{sec:related}
\textbf{Neuro-symbolic generation and LTNs.} Logic Tensor Networks train by maximizing grounded
satisfaction \citep{badreddine2022ltn}, and related work realises logic as a differentiable loss
\citep{xu2018semantic,fischer2019dl2}. All are \emph{predicate-style} soft constraints (RQ4).
Hard-constraint output layers guarantee
specific fragments \citep{ahmed2022spl,hoernle2022multiplexnet,giunchiglia2020coherent}. \textbf{Constrained tabular
generation.} The constraint layer of \citet{stoian2024cdgm}, our primary baseline, guarantees
linear inequalities by clamping. GOGGLE \citep{liu2022goggle} injects only simple correlations,
and CTGAN and TVAE \citep{xu2019modeling} give no guarantees. Closest are constrained adversarial
networks \citep{di2020efficient} and a non-convex constraint layer \citep{stoianbeyond};
Appendix~\ref{app:related} surveys the rest and our function symbols'
antecedents \citep{dugas2009incorporating,carpenter2017stan}.

\section{Discussion and Conclusions}\label{sec:conc}
In an LTN-GAN the choice of grounding is decisive. Predicate grounding cannot reach
high-resolution constraints, and clamping is valid but collapses the margin distribution,
invisibly to standard metrics and predictably from $R$. Function-symbol grounding recovers that
distribution with exact validity, outperforming the constraint layer on four high-resolution
datasets, and the hybrid at least matches it on the benchmark of \citet{stoian2024cdgm},
with the same advantage in conditional inverse design (Appendix~\ref{app:extra}).

\textbf{Scope and limitations.} Our charts cover the linear fragment, where Fourier-Motzkin
gives the admissible intervals. Non-convex feasible sets from nonlinear or disjunctive
constraints are left to future work, and discrete margins are boundary point masses
the hybrid correctly clamps. 

\bibliography{nesy2026-sample}

@article{badreddine2022ltn,
  author  = {Badreddine, Samy and d'Avila Garcez, Artur and Serafini, Luciano and Spranger, Michael},
  title   = {{Logic Tensor Networks}},
  journal = {Artificial Intelligence},
  volume  = {303},
  pages   = {103649},
  year    = {2022},
  doi     = {10.1016/j.artint.2021.103649},
  url     = {https://doi.org/10.1016/j.artint.2021.103649}
}

@article{upreti2026ltngan,
  author    = {Upreti, Nijesh and Belle, Vaishak},
  title     = {{Logic Tensor Network-Enhanced Generative Adversarial Network}},
  journal   = {Electronic Proceedings in Theoretical Computer Science},
  volume    = {439},
  pages     = {89--113},
  year      = {2026},
  publisher = {Open Publishing Association},
  doi       = {10.4204/EPTCS.439.8},
  url       = {https://doi.org/10.4204/EPTCS.439.8}
}

@inproceedings{xu2018semantic,
  author    = {Xu, Jingyi and Zhang, Zilu and Friedman, Tal and Liang, Yitao and {Van den Broeck}, Guy},
  title     = {A Semantic Loss Function for Deep Learning with Symbolic Knowledge},
  booktitle = {Proceedings of the 35th International Conference on Machine Learning (ICML)},
  series    = {Proceedings of Machine Learning Research},
  volume    = {80},
  pages     = {5502--5511},
  year      = {2018},
  url       = {https://proceedings.mlr.press/v80/xu18h.html}
}

@inproceedings{stoian2024cdgm,
  author    = {Stoian, Mihaela C{\u{a}}t{\u{a}}lina and Dyrmishi, Salijona and Cordy, Maxime and Lukasiewicz, Thomas and Giunchiglia, Eleonora},
  title     = {How Realistic Is Your Synthetic Data? Constraining Deep Generative Models for Tabular Data},
  booktitle = {Proceedings of the 12th International Conference on Learning Representations (ICLR)},
  year      = {2024},
  url       = {https://openreview.net/forum?id=tBROYsEz9G}
}

@inproceedings{ahmed2022spl,
  author    = {Ahmed, Kareem and Teso, Stefano and Chang, Kai-Wei and {Van den Broeck}, Guy and Vergari, Antonio},
  title     = {Semantic Probabilistic Layers for Neuro-Symbolic Learning},
  booktitle = {Advances in Neural Information Processing Systems (NeurIPS)},
  volume    = {35},
  year      = {2022},
  url       = {https://proceedings.neurips.cc/paper_files/paper/2022/hash/c182ec594f38926b7fcb827635b9a8f4-Abstract-Conference.html}
}

@inproceedings{fischer2019dl2,
  author    = {Fischer, Marc and Balunovi{\'c}, Mislav and Drachsler-Cohen, Dana and Gehr, Timon and Zhang, Ce and Vechev, Martin},
  title     = {{DL2}: Training and Querying Neural Networks with Logic},
  booktitle = {Proceedings of the 36th International Conference on Machine Learning (ICML)},
  series    = {Proceedings of Machine Learning Research},
  volume    = {97},
  pages     = {1931--1941},
  year      = {2019},
  url       = {https://proceedings.mlr.press/v97/fischer19a.html}
}

@inproceedings{hoernle2022multiplexnet,
  author    = {Hoernle, Nick and Karampatsis, Rafael-Michael and Belle, Vaishak and Gal, Kobi},
  title     = {{MultiplexNet}: Towards Fully Satisfied Logical Constraints in Neural Networks},
  booktitle = {Proceedings of the 36th AAAI Conference on Artificial Intelligence},
  pages     = {5700--5709},
  year      = {2022},
  doi       = {10.1609/aaai.v36i5.20512},
  url       = {https://doi.org/10.1609/aaai.v36i5.20512}
}

@inproceedings{giunchiglia2020coherent,
  author    = {Giunchiglia, Eleonora and Lukasiewicz, Thomas},
  title     = {Coherent Hierarchical Multi-Label Classification Networks},
  booktitle = {Advances in Neural Information Processing Systems (NeurIPS)},
  volume    = {33},
  year      = {2020},
  url       = {https://proceedings.neurips.cc/paper/2020/hash/6dd4e10e3296fa63738371ec0d5df818-Abstract.html}
}

@article{lagaris1998ann,
  author  = {Lagaris, Isaac E. and Likas, Aristidis and Fotiadis, Dimitrios I.},
  title   = {Artificial Neural Networks for Solving Ordinary and Partial Differential Equations},
  journal = {IEEE Transactions on Neural Networks},
  volume  = {9},
  number  = {5},
  pages   = {987--1000},
  year    = {1998},
  doi     = {10.1109/72.712178},
  url     = {https://doi.org/10.1109/72.712178}
}

@article{lu2021hardconstraints,
  author  = {Lu, Lu and Pestourie, Rapha{\"e}l and Yao, Wenjie and Wang, Zhicheng and Verdugo, Francesc and Johnson, Steven G.},
  title   = {Physics-Informed Neural Networks with Hard Constraints for Inverse Design},
  journal = {SIAM Journal on Scientific Computing},
  volume  = {43},
  number  = {6},
  pages   = {B1105--B1132},
  year    = {2021},
  doi     = {10.1137/21M1397908},
  url     = {https://doi.org/10.1137/21M1397908}
}

@article{chen2019alchemy,
  author  = {Chen, Guangyong and Chen, Pengfei and Hsieh, Chang-Yu and Lee, Chee-Kong and Liao, Benben and Liao, Renjie and Liu, Weiwen and Qiu, Jiezhong and Sun, Qiming and Tang, Jie and Zemel, Richard and Zhang, Shengyu},
  title   = {Alchemy: A Quantum Chemistry Dataset for Benchmarking {AI} Models},
  journal = {arXiv preprint arXiv:1906.09427},
  year    = {2019},
  doi     = {10.48550/arXiv.1906.09427},
  url     = {https://arxiv.org/abs/1906.09427}
}

@inproceedings{naeem2020reliable,
  author    = {Naeem, Muhammad Ferjad and Oh, Seong Joon and Uh, Youngjung and Choi, Yunjey and Yoo, Jaejun},
  title     = {Reliable Fidelity and Diversity Metrics for Generative Models},
  booktitle = {Proceedings of the 37th International Conference on Machine Learning (ICML)},
  series    = {Proceedings of Machine Learning Research},
  volume    = {119},
  pages     = {7176--7185},
  year      = {2020},
  url       = {https://proceedings.mlr.press/v119/naeem20a.html}
}

@inproceedings{xu2019modeling,
  author    = {Xu, Lei and Skoularidou, Maria and Cuesta-Infante, Alfredo and Veeramachaneni, Kalyan},
  title     = {Modeling Tabular Data using Conditional {GAN}},
  booktitle = {Advances in Neural Information Processing Systems (NeurIPS)},
  volume    = {32},
  pages     = {7333--7343},
  year      = {2019},
  url       = {https://proceedings.neurips.cc/paper/2019/hash/254ed7d2de3b23ab10936522dd547b78-Abstract.html}
}

@inproceedings{liu2022goggle,
  author    = {Liu, Tennison and Qian, Zhaozhi and Berrevoets, Jeroen and van der Schaar, Mihaela},
  title     = {{GOGGLE}: Generative Modelling for Tabular Data by Learning Relational Structure},
  booktitle = {Proceedings of the 11th International Conference on Learning Representations (ICLR)},
  year      = {2023},
  url       = {https://openreview.net/forum?id=fPVRcJqspu}
}

@article{balcells2020tmqm,
  author  = {Balcells, David and Skjelstad, Bastian Bjerkem},
  title   = {{tmQM} Dataset---Quantum Geometries and Properties of 86k Transition Metal Complexes},
  journal = {Journal of Chemical Information and Modeling},
  volume  = {60},
  number  = {12},
  pages   = {6135--6146},
  year    = {2020},
  doi     = {10.1021/acs.jcim.0c01041},
  url     = {https://doi.org/10.1021/acs.jcim.0c01041}
}

@article{schreiner2022transition1x,
  author  = {Schreiner, Mathias and Bhowmik, Arghya and Vegge, Tejs and Busk, Jonas and Winther, Ole},
  title   = {{Transition1x}---a Dataset for Building Generalizable Reactive Machine Learning Potentials},
  journal = {Scientific Data},
  volume  = {9},
  pages   = {779},
  year    = {2022},
  doi     = {10.1038/s41597-022-01870-w},
  url     = {https://doi.org/10.1038/s41597-022-01870-w}
}

@misc{nyctlc,
  author       = {{New York City Taxi and Limousine Commission}},
  title        = {{TLC} Trip Record Data},
  year         = {2024},
  howpublished = {\url{https://www.nyc.gov/site/tlc/about/tlc-trip-record-data.page}},
  note         = {Accessed: 2026-06-16}
}

@inproceedings{serafini2016logic,
  author    = {Serafini, Luciano and d'Avila Garcez, Artur},
  title     = {Learning and Reasoning with {Logic Tensor Networks}},
  booktitle = {AI*IA 2016: Advances in Artificial Intelligence},
  series    = {Lecture Notes in Computer Science},
  volume    = {10037},
  pages     = {334--348},
  publisher = {Springer},
  year      = {2016},
  url       = {https://doi.org/10.1007/978-3-319-49130-1_25}
}

@inproceedings{di2020efficient,
  author    = {Di Liello, Luca and Ardino, Pierfrancesco and Gobbi, Jacopo and Morettin, Paolo and Teso, Stefano and Passerini, Andrea},
  title     = {Efficient Generation of Structured Objects with Constrained Adversarial Networks},
  booktitle = {Advances in Neural Information Processing Systems (NeurIPS)},
  volume    = {33},
  year      = {2020},
  url       = {https://proceedings.neurips.cc/paper/2020/hash/a87c11b9100c608b7f8e98cfa316ff7b-Abstract.html}
}

@inproceedings{hu2018deep,
  author    = {Hu, Zhiting and Yang, Zichao and Salakhutdinov, Ruslan and Qin, Lianhui and Liang, Xiaodan and Dong, Haoye and Xing, Eric P.},
  title     = {Deep Generative Models with Learnable Knowledge Constraints},
  booktitle = {Advances in Neural Information Processing Systems (NeurIPS)},
  volume    = {31},
  pages     = {10522--10533},
  year      = {2018},
  url       = {https://proceedings.neurips.cc/paper/2018/hash/d7e77c835af3d2a803c1cf28d60575bc-Abstract.html}
}

@inproceedings{stoianbeyond,
  author    = {Stoian, Mihaela C{\u{a}}t{\u{a}}lina and Giunchiglia, Eleonora},
  title     = {Beyond the Convexity Assumption: Realistic Tabular Data Generation under Quantifier-Free Real Linear Constraints},
  booktitle = {Proceedings of the 13th International Conference on Learning Representations (ICLR)},
  year      = {2025},
  url       = {https://openreview.net/forum?id=rx0TCew0Lj}
}

@inproceedings{peng2023generating,
  author    = {Peng, Yifei and Zha, Zijie and Jin, Yu and Luo, Zhexu and Dai, Wang-Zhou and Ren, Zhong and Ding, Yao-Xiang and Zhou, Kun},
  title     = {Generating by Understanding: Neural Visual Generation with Logical Symbol Groundings},
  booktitle = {Proceedings of the 31st ACM SIGKDD Conference on Knowledge Discovery and Data Mining},
  pages     = {2291--2302},
  year      = {2025},
  doi       = {10.1145/3711896.3736978},
  url       = {https://doi.org/10.1145/3711896.3736978}
}

@inproceedings{young2022neurosymbolic,
  author    = {Young, Halley and Du, Maxwell and Bastani, Osbert},
  title     = {Neurosymbolic Deep Generative Models for Sequence Data with Relational Constraints},
  booktitle = {Advances in Neural Information Processing Systems (NeurIPS)},
  volume    = {35},
  year      = {2022},
  url       = {http://papers.nips.cc/paper_files/paper/2022/hash/f13ceb1b94145aad0e54186373cc86d7-Abstract-Conference.html}
}

@inproceedings{misino2022vael,
  author    = {Misino, Eleonora and Marra, Giuseppe and Sansone, Emanuele},
  title     = {{VAEL}: Bridging Variational Autoencoders and Probabilistic Logic Programming},
  booktitle = {Advances in Neural Information Processing Systems (NeurIPS)},
  volume    = {35},
  year      = {2022},
  url       = {http://papers.nips.cc/paper_files/paper/2022/hash/1e38b2a0b77541b14a3315c99697b835-Abstract-Conference.html}
}

@inproceedings{ahmed2023pseudo,
  author    = {Ahmed, Kareem and Chang, Kai-Wei and {Van den Broeck}, Guy},
  title     = {A Pseudo-Semantic Loss for Autoregressive Models with Logical Constraints},
  booktitle = {Advances in Neural Information Processing Systems (NeurIPS)},
  volume    = {36},
  year      = {2023},
  url       = {http://papers.nips.cc/paper_files/paper/2023/hash/3accfe8332366a6f740d8740cd4cd653-Abstract-Conference.html}
}

@inproceedings{yang2022injecting,
  author    = {Yang, Zhun and Lee, Joohyung and Park, Chiyoun},
  title     = {Injecting Logical Constraints into Neural Networks via Straight-Through Estimators},
  booktitle = {Proceedings of the 39th International Conference on Machine Learning (ICML)},
  series    = {Proceedings of Machine Learning Research},
  volume    = {162},
  pages     = {25096--25122},
  year      = {2022},
  url       = {https://proceedings.mlr.press/v162/yang22h.html}
}

@inproceedings{mendez2024semantic,
  author    = {M{\'e}ndez-Lucero, Miguel {\'A}ngel and Bojorquez Gallardo, Enrique and Belle, Vaishak},
  title     = {Semantic Objective Functions: A Distribution-Aware Method for Adding Logical Constraints in Deep Learning},
  booktitle = {Proceedings of the 17th International Conference on Agents and Artificial Intelligence (ICAART)},
  pages     = {909--917},
  year      = {2025},
  doi       = {10.5220/0013229200003890},
  url       = {https://doi.org/10.5220/0013229200003890}
}

@inproceedings{li2023neuro,
  author    = {Li, Zenan and Huang, Yunpeng and Li, Zhaoyu and Yao, Yuan and Xu, Jingwei and Chen, Taolue and Ma, Xiaoxing and Lu, Jian},
  title     = {Neuro-symbolic Learning Yielding Logical Constraints},
  booktitle = {Advances in Neural Information Processing Systems (NeurIPS)},
  volume    = {36},
  year      = {2023},
  url       = {http://papers.nips.cc/paper_files/paper/2023/hash/4459c3c143db74ee52afebdf56836375-Abstract-Conference.html}
}

@article{chao2021constrained,
  author  = {Chao, Xiaopeng and Cao, Jiangzhong and Lu, Yuqin and Dai, Qingyun and Liang, Shangsong},
  title   = {Constrained Generative Adversarial Networks},
  journal = {IEEE Access},
  volume  = {9},
  pages   = {19208--19218},
  year    = {2021},
  doi     = {10.1109/ACCESS.2021.3054822},
  url     = {https://doi.org/10.1109/ACCESS.2021.3054822}
}

@inproceedings{heim2019interactive,
  author    = {Heim, Eric},
  title     = {Constrained Generative Adversarial Networks for Interactive Image Generation},
  booktitle = {IEEE/CVF Conference on Computer Vision and Pattern Recognition (CVPR)},
  pages     = {10753--10761},
  year      = {2019},
  doi       = {10.1109/CVPR.2019.01101},
  url       = {https://doi.org/10.1109/CVPR.2019.01101}
}

@article{hess2022physically,
  author  = {Hess, Philipp and Dr{\"u}ke, Markus and Petri, Stefan and Strnad, Felix M. and Boers, Niklas},
  title   = {Physically Constrained Generative Adversarial Networks for Improving Precipitation Fields from {Earth} System Models},
  journal = {Nature Machine Intelligence},
  volume  = {4},
  number  = {10},
  pages   = {828--839},
  year    = {2022},
  doi     = {10.1038/s42256-022-00540-1},
  url     = {https://doi.org/10.1038/s42256-022-00540-1}
}

@article{zeng2021enforcing,
  author  = {Zeng, Yang and Wu, Jin-Long and Xiao, Heng},
  title   = {Enforcing Imprecise Constraints on Generative Adversarial Networks for Emulating Physical Systems},
  journal = {Communications in Computational Physics},
  volume  = {30},
  number  = {3},
  pages   = {635--665},
  year    = {2021},
  doi     = {10.4208/cicp.OA-2020-0106},
  url     = {https://doi.org/10.4208/cicp.OA-2020-0106}
}

@inproceedings{xue2019embedding,
  author    = {Xue, Yexiang and van Hoeve, Willem-Jan},
  title     = {Embedding Decision Diagrams into Generative Adversarial Networks},
  booktitle = {Integration of Constraint Programming, Artificial Intelligence, and Operations Research (CPAIOR)},
  series    = {Lecture Notes in Computer Science},
  volume    = {11494},
  pages     = {616--632},
  publisher = {Springer},
  year      = {2019},
  url       = {https://doi.org/10.1007/978-3-030-19212-9_41}
}

@inproceedings{seff2022vitruvion,
  author    = {Seff, Ari and Zhou, Wenda and Richardson, Nick and Adams, Ryan P.},
  title     = {Vitruvion: A Generative Model of Parametric {CAD} Sketches},
  booktitle = {Proceedings of the 10th International Conference on Learning Representations (ICLR)},
  year      = {2022},
  url       = {https://openreview.net/forum?id=Ow1C7s3UcY}
}

@inproceedings{para2021sketchgen,
  author    = {Para, Wamiq Reyaz and Bhat, Shariq Farooq and Guerrero, Paul and Kelly, Tom and Mitra, Niloy J. and Guibas, Leonidas J. and Wonka, Peter},
  title     = {{SketchGen}: Generating Constrained {CAD} Sketches},
  booktitle = {Advances in Neural Information Processing Systems (NeurIPS)},
  volume    = {34},
  pages     = {5077--5088},
  year      = {2021},
  url       = {https://proceedings.neurips.cc/paper/2021/hash/28891cb4ab421830acc36b1f5fd6c91e-Abstract.html}
}

@inproceedings{ferber2024genco,
  author    = {Ferber, Aaron M. and Zharmagambetov, Arman and Huang, Taoan and Dilkina, Bistra and Tian, Yuandong},
  title     = {{GenCO}: Generating Diverse Designs with Combinatorial Constraints},
  booktitle = {Proceedings of the 41st International Conference on Machine Learning (ICML)},
  series    = {Proceedings of Machine Learning Research},
  volume    = {235},
  pages     = {13445--13459},
  year      = {2024},
  url       = {https://proceedings.mlr.press/v235/ferber24a.html}
}

@inproceedings{liu2020chance,
  author    = {Liu, Xianggen and Liu, Qiang and Song, Sen and Peng, Jian},
  title     = {A Chance-Constrained Generative Framework for Sequence Optimization},
  booktitle = {Proceedings of the 37th International Conference on Machine Learning (ICML)},
  series    = {Proceedings of Machine Learning Research},
  volume    = {119},
  pages     = {6271--6281},
  year      = {2020},
  url       = {http://proceedings.mlr.press/v119/liu20i.html}
}

@inproceedings{li2020supporting,
  author    = {Li, Wanxin},
  title     = {Supporting Database Constraints in Synthetic Data Generation Based on Generative Adversarial Networks},
  booktitle = {Proceedings of the 2020 ACM SIGMOD International Conference on Management of Data},
  pages     = {2875--2877},
  year      = {2020},
  doi       = {10.1145/3318464.3384414},
  url       = {https://doi.org/10.1145/3318464.3384414}
}

@inproceedings{kotelnikov2023tabddpm,
  author    = {Kotelnikov, Akim and Baranchuk, Dmitry and Rubachev, Ivan and Babenko, Artem},
  title     = {{TabDDPM}: Modelling Tabular Data with Diffusion Models},
  booktitle = {Proceedings of the 40th International Conference on Machine Learning (ICML)},
  series    = {Proceedings of Machine Learning Research},
  volume    = {202},
  pages     = {17564--17579},
  year      = {2023},
  url       = {https://proceedings.mlr.press/v202/kotelnikov23a.html}
}

@inproceedings{kim2023stasy,
  author    = {Kim, Jayoung and Lee, Chaejeong and Park, Noseong},
  title     = {{STaSy}: Score-based Tabular Data Synthesis},
  booktitle = {Proceedings of the 11th International Conference on Learning Representations (ICLR)},
  year      = {2023},
  url       = {https://openreview.net/forum?id=1mNssCWt_v}
}

@inproceedings{zhao2021ctabgan,
  author    = {Zhao, Zilong and Kunar, Aditya and Birke, Robert and Chen, Lydia Y.},
  title     = {{CTAB-GAN}: Effective Table Data Synthesizing},
  booktitle = {Proceedings of the 13th Asian Conference on Machine Learning (ACML)},
  series    = {Proceedings of Machine Learning Research},
  volume    = {157},
  pages     = {97--112},
  year      = {2021},
  url       = {https://proceedings.mlr.press/v157/zhao21a.html}
}

@article{dugas2009incorporating,
  author  = {Dugas, Charles and Bengio, Yoshua and B{\'e}lisle, Fran{\c{c}}ois and Nadeau, Claude and Garcia, Ren{\'e}},
  title   = {Incorporating Functional Knowledge in Neural Networks},
  journal = {Journal of Machine Learning Research},
  volume  = {10},
  number  = {42},
  pages   = {1239--1262},
  year    = {2009}
}

@article{carpenter2017stan,
  author  = {Carpenter, Bob and Gelman, Andrew and Hoffman, Matthew D. and Lee, Daniel and Goodrich, Ben and Betancourt, Michael and Brubaker, Marcus and Guo, Jiqiang and Li, Peter and Riddell, Allen},
  title   = {Stan: A Probabilistic Programming Language},
  journal = {Journal of Statistical Software},
  volume  = {76},
  number  = {1},
  pages   = {1--32},
  year    = {2017},
  doi     = {10.18637/jss.v076.i01}
}

@article{ramakrishnan2015delta,
  author  = {Ramakrishnan, Raghunathan and Dral, Pavlo O. and Rupp, Matthias and von Lilienfeld, O. Anatole},
  title   = {Big Data Meets Quantum Chemistry Approximations: The $\Delta$-Machine Learning Approach},
  journal = {Journal of Chemical Theory and Computation},
  volume  = {11},
  number  = {5},
  pages   = {2087--2096},
  year    = {2015},
  doi     = {10.1021/acs.jctc.5b00099}
}

\clearpage
\appendix
\raggedbottom
\let\bodysection\section
\renewcommand{\section}{\clearpage\bodysection}
\section{Extended Related Work}\label{app:related}
This appendix expands Section~\ref{sec:related}. The body cites only the most directly related work.

\paragraph{Logic as a soft training signal.} Beyond Logic Tensor Networks
\citep{badreddine2022ltn,serafini2016logic}, many methods inject logic as a differentiable loss. The
semantic loss \citep{xu2018semantic} and its pseudo-semantic extension for autoregressive models
\citep{ahmed2023pseudo} penalise probability mass on violating assignments. DL2 \citep{fischer2019dl2},
straight-through estimators \citep{yang2022injecting}, distribution-aware objectives
\citep{mendez2024semantic}, and bilevel formulations \citep{li2023neuro} optimise logical losses
directly, and VAEL \citep{misino2022vael} couples a variational autoencoder with probabilistic logic.
These all inject logic as a \emph{soft} satisfaction signal rather than a hard guarantee. We do
not evaluate each of them on high-resolution structural constraints, because the obstruction we
identify is a property of soft satisfaction itself. In the ambient coordinates, a structural
margin predicate has usable gradient on a $\Theta(1/R)$ fraction of samples
(Corollary~\ref{cor:ordering}), and RQ4 confirms this for the three predicate placements we
test.

\paragraph{Hard constraints by construction.} A second family of methods guarantees
satisfaction, each for a specific class of constraints. Semantic Probabilistic Layers
\citep{ahmed2022spl}, MultiplexNet \citep{hoernle2022multiplexnet}, and coherent hierarchical
classifiers \citep{giunchiglia2020coherent} cover classes of logical constraints, and
physics-informed networks enforce differential constraints
\citep{lagaris1998ann,lu2021hardconstraints}. For linear constraints on tabular data, the constraint
layer of \citet{stoian2024cdgm} clamps onto the feasible polytope, recently extended to quantifier-free
non-convex constraints \citep{stoianbeyond}. We share the exact-validity goal but, rather than clamping,
ground the constraint as a change of coordinates that can also preserve the distribution of the constraint margin.

\paragraph{Constrained generative models.} Many domains have built constraints into
generators. Constrained adversarial networks enforce logical requirements during training
\citep{di2020efficient,hu2018deep,chao2021constrained}, and related mechanisms appear in
interactive image editing \citep{heim2019interactive}, physical fields
\citep{hess2022physically,zeng2021enforcing}, decision diagrams \citep{xue2019embedding},
parametric CAD sketches \citep{seff2022vitruvion,para2021sketchgen}, combinatorial design
\citep{ferber2024genco}, sequence optimisation \citep{liu2020chance,young2022neurosymbolic},
and visual scene generation \citep{peng2023generating}. Almost all of them enforce the
constraints by penalty, rejection, or post-hoc projection. Function-symbol grounding instead
reparameterises the output space, so the constraints hold by construction. Monotone link
functions that enforce order and positivity in regression networks go back to
\citet{dugas2009incorporating}, and probabilistic-programming samplers routinely transform
positive, interval, and ordered parameters to unconstrained space before sampling
\citep{carpenter2017stan}. We use the same links, but in a new role. They ground logical
axioms, they are assembled along the Fourier-Motzkin order so that any satisfiable linear
system can be charted, and the discriminator is trained in the transformed coordinates.
Modelling a difference rather than the levels that form it also echoes $\Delta$-machine
learning for chemical energies \citep{ramakrishnan2015delta}, a supervised antecedent of the
margin-form baseline of Appendix~\ref{app:extra}.

\paragraph{Rejection sampling and chance-constrained formulations.} Two further strategies
deserve direct comparison. \emph{Rejection sampling} draws from an unconstrained generator and
keeps only the valid samples. It recovers the correct conditional distribution in principle, but
its acceptance rate is the unconstrained generator's own validity, which is exactly what
collapses in the high-$R$ regime (that validity is $0.05$ on Alchemy, Table~\ref{tab:rq1full}, and decreases
as $R$ grows), and for definitional identities it is zero outright. By
Corollary~\ref{cor:identity} an absolutely continuous generator satisfies an identity with
probability $0$, so no rejection budget suffices. \emph{Chance-constrained} formulations
\citep{liu2020chance} require constraints to hold with a prescribed probability rather than on
every sample. They operate in the same soft-satisfaction regime as predicate grounding and
inherit its high-$R$ gradient obstruction (Corollary~\ref{cor:ordering}). Function-symbol grounding
avoids both: validity is guaranteed, no rejection loop is needed, and the margin is learned at unit
scale.

\paragraph{Tabular data generation.} For tabular data, adversarial and variational generators (CTGAN and
TVAE \citep{xu2019modeling}, CTAB-GAN \citep{zhao2021ctabgan}), score-based and diffusion models (STaSy
\citep{kim2023stasy}, TabDDPM \citep{kotelnikov2023tabddpm}), and relational-structure models (GOGGLE
\citep{liu2022goggle}) improve marginal and dependency fidelity but provide no validity
guarantees. Database-constraint enforcement has also been added to GAN-based synthesis
\citep{li2020supporting}. The constraint layer \citep{stoian2024cdgm} and our function-symbol
grounding both add hard guarantees, but only function-symbol grounding recovers the
distribution of the constraint margin for high-resolution constraints.

\section{The Hybrid Procedure}\label{app:alg}
\SetKwComment{tcc}{$\triangleright$\ }{}
\SetKwComment{tcp}{$\triangleright$\ }{}
\SetCommentSty{textnormal}
\begin{algorithm2e}[H]
\DontPrintSemicolon
\SetKwInOut{Input}{Input}\SetKwInOut{Output}{Output}
\Input{linear constraints $\Pi$; data $\mathcal{D}$; thresholds $\tau_s{=}0.9$ (satisfaction), $\tau_d{=}0.2$ (discreteness)}
\Output{an LTN-GAN generator with exact validity on $\Pi$}
\BlankLine
\tcc{Phase 1: reduce $\Pi$ to per-variable interval bounds}
Fix a variable order; by Fourier-Motzkin elimination, write each $x_i$'s bounds $\ell_i,u_i$ as maxima and minima of linear functions of the strictly earlier variables\;
\BlankLine
\tcc{Phase 2: choose a grounding per bounded variable (one short pre-run)}
Train a free generator briefly; measure each bounded variable $x_i$'s interval satisfaction $s_i$ on the generator's samples and its binding-margin discreteness $d_i$ on $\mathcal{D}$\;
\For{each bounded variable $x_i$}{
  \eIf{$s_i<\tau_s$ \textnormal{\textbf{and}} $d_i\le\tau_d$}{
    $\mathrm{mode}_i \leftarrow \textsc{chart}$\tcp*{high-$R$, continuous}
  }{
    $\mathrm{mode}_i \leftarrow \textsc{clamp}$\tcp*{low-$R$ or discrete}
  }
}
\BlankLine
\tcc{Phase 3: train; charted variables' axioms hold by construction (no satisfaction loss)}
\For{each training step}{
  draw noise $\zeta$ and set the free coordinates $z \leftarrow G_\theta(\zeta)$\;
  \For{each variable $i$ in Fourier-Motzkin order}{
    \uIf{$x_i$ is fixed by an identity}{$x_i \leftarrow \textstyle\sum_{j<i} w_{ij}x_j+w_{i0}$\;}
    \uElseIf{$x_i$ is free (highest-order in no axiom)}{$x_i \leftarrow z_i$\;}
    \uElseIf{$\mathrm{mode}_i = \textsc{chart}$}{
      $x_i \leftarrow$ the link fitting $[\ell_i,u_i]$ (softplus or $\sigma$, Section~\ref{sec:method}; $\exp$, Appendix~\ref{app:hparams})\tcp*{$\mathrm{Sat}{=}1$}
    }
    \Else{$x_i \leftarrow \min(\max(z_i,\ell_i),\,u_i)$\tcp*{in-loop clamp}}
  }
  update $G_\theta,D_\psi$ adversarially on the chart features: each charted variable enters as its coordinate $z_i$ and each clamped variable as its standardized clamped value, with real samples encoded the same way through $\varphi^{-1}$\;
}
\caption{Hybrid function-symbol grounding in an LTN-GAN.}
\label{alg:hybrid}
\end{algorithm2e}

\section{Proof of Proposition~\ref{thm:valid}}\label{app:proof}
Fourier-Motzkin elimination over $\Pi$ in the variable order gives, for each $x_i$, its lower and
upper bounds $\ell_i,u_i$ as maxima and minima of linear functions of the strictly-earlier
variables. Assembling in
this order, $x_i$'s bounds depend only on finalized variables; if $\Pi$ is satisfiable the
interval $[\ell_i,u_i]$ is non-empty (Fourier-Motzkin completeness). The interior placement
also requires the \emph{open} interval to be non-empty, $\ell_i<u_i$: if $\Pi$ pinches
$\ell_i=u_i$ (an equality implied by the inequalities though not declared as an identity), the
variable is routed to the dependent branch below and set to its single admissible value. For a
bounded or one-sided
variable with a non-degenerate interval, softplus and $\sigma$ map $\mathbb{R}$ into the interior of its admissible interval, so each
strict inequality in which $x_i$ is highest-order holds with positive margin; a dependent
variable is set by its identity to the single admissible value $\ell_i=u_i$, meeting its two
non-strict encoding inequalities with margin $0$; and a free variable is highest-order in no
constraint, so it imposes nothing. Induction over the order gives
$\varphi(z)\models\Pi$.\hfill$\square$

\section{The Resolution Ratio as Condition Number}\label{app:theory}
Figure~\ref{fig:regimes} gives the intuition behind this appendix. The two regimes differ not in
the constraint but in the size of its margin relative to the data scale, and that ratio is what
makes the clamp and the predicate succeed or fail. The statements below make this precise.

\paragraph{A worked example.} Two of our orderings sit at the two ends of the range. Alchemy's $U-U_0$ has $R\approx9\times10^{4}$. The margin's spread $\sigma_m$ is about $10^5$ times smaller than the scale $\sigma_s$, so after standardization the margin occupies a band of relative width ${\sim}1/R\approx10^{-5}$, far below what a Lipschitz discriminator can resolve. A free generator that matches every feature's marginal therefore still violates the orderings on a constant fraction of samples (about half on average; RQ4). Trained in the loop, the constraint layer moves every sample onto $U{=}U_0$, and the distribution of the constraint margin collapses to a point mass at $0$. Taxi's $\mathrm{total}-\mathrm{fare}$ has $R\approx3.6$. The margin is about a third of the scale, the discriminator resolves it, the free generator already satisfies the constraint on most samples, the clamp rarely acts, and clamping leaves the distribution largely intact. The hybrid charts the former (high $R$) and clamps the latter (low $R$).

\begin{figure}[!htb]\centering
\includegraphics[width=\linewidth]{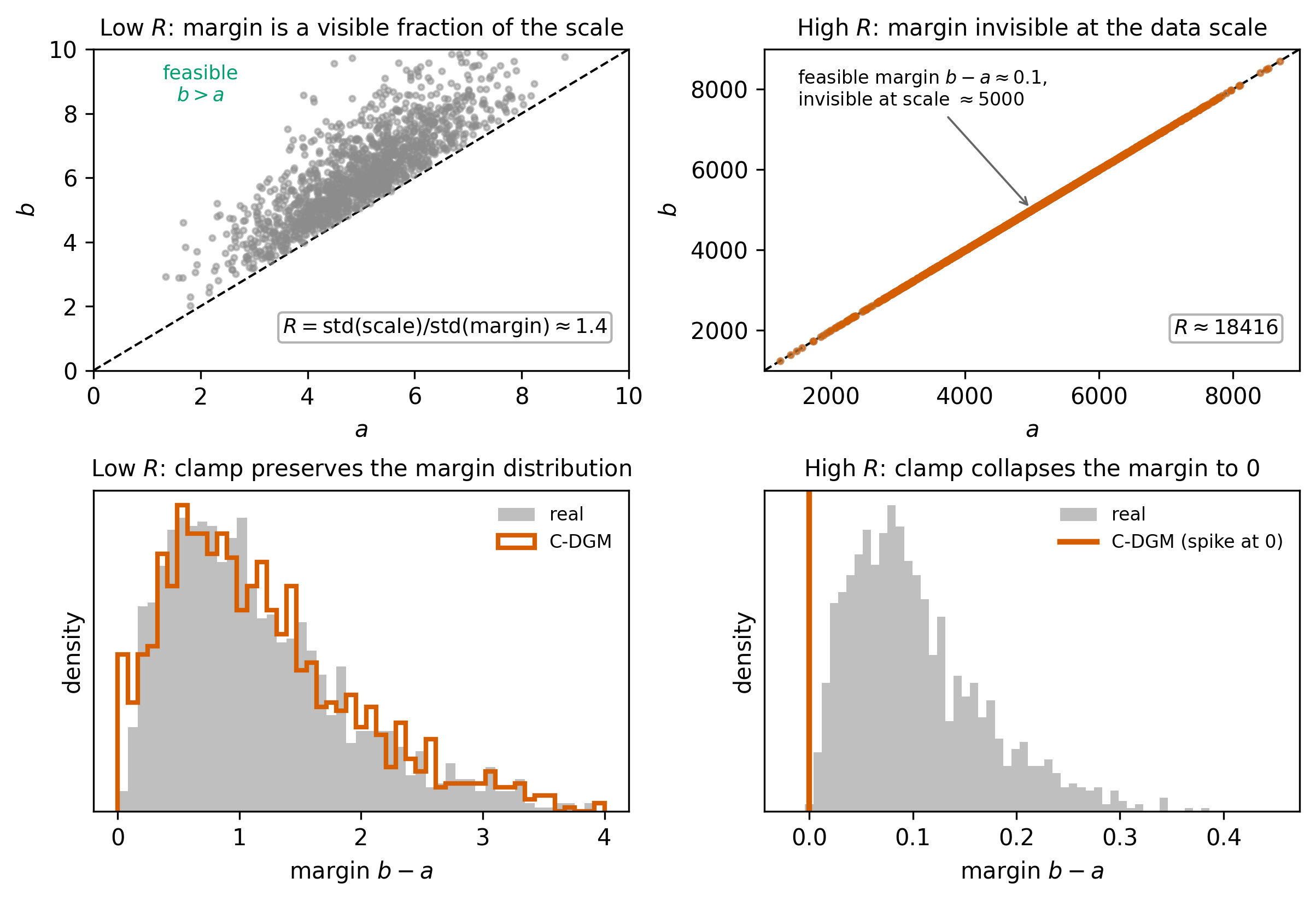}
\caption{\textbf{High versus low resolution ratio $R$ (schematic).} The same ordering $b>a$ in two
regimes. \emph{Top:} the data $(a,b)$ and the boundary $b=a$. At low $R$ (left) the feasible margin
$b-a$ is a visible fraction of the scale, so a free generator resolves it and the constraint
layer's clamp rarely acts. At high $R$ (right) the margin is tiny relative to the scale and
invisible at the data's magnitude, so every sample lies close to the boundary. \emph{Bottom:} the margin
distribution. At low $R$ the clamp (orange) preserves the distribution of the real constraint margin (grey); at high $R$ the
free generator cannot place the sub-resolution margin, so the clamp collapses every sample onto the
boundary, a point mass at $0$ disjoint from the real distribution. Function-symbol grounding repairs the
high-$R$ case by making the margin a unit-scale coordinate (Figure~\ref{fig:mechanism}).}
\label{fig:regimes}
\end{figure}

We now prove Corollaries~\ref{cor:identity} and~\ref{cor:ordering} and
Proposition~\ref{thm:collapse}.

\noindent\emph{Proof of Corollary~\ref{cor:identity}.} Full rank of $\nabla c$ on $\mathcal{M}_{=}$ makes $0$ a regular value, so by
the regular-value theorem $\mathcal{M}_{=}=c^{-1}(0)$ is an embedded $C^{1}$ submanifold of
dimension $D-p$; a submanifold of dimension below the ambient $D$ has Lebesgue measure zero. If
$\nu\ll\mathrm{Leb}$ with density $f$, then $\nu(\mathcal{M}_{=})=\int_{\mathcal{M}_{=}}f\,
\mathrm{d}\,\mathrm{Leb}=0$. Grounding sets the dependent coordinate to the exact value its identity
prescribes, so every generated $x$ lies in $\mathcal{M}_{=}$ by construction; the generated distribution is
then carried by the null set $\mathcal{M}_{=}$, so $\nu\not\ll\mathrm{Leb}$, escaping the
hypothesis, and $c(x)=0$ for every sample.\hfill$\square$

\noindent\emph{Proof of Corollary~\ref{cor:ordering}.} Differentiating, $\partial_{(b-a)}\sigma\!\big((b-a)/s\big)=
\sigma'\!\big((b-a)/s\big)/s=P_{s}(1-P_{s})/s$. The logistic derivative
$\sigma'(t)=\sigma(t)\,(1-\sigma(t))$ is $\Theta(1)$ on any fixed band $|t|\le c$ (peak $\sigma'(0)=\tfrac14$) and decays as $e^{-|t|}$ for
$|t|\gg1$, so the gradient is $\Theta(1/s)$ exactly for $|b-a|\lesssim s$ and exponentially
suppressed otherwise. With $b-a$ at scale $\sigma_{s}=R\,\sigma_{m}$ and density $\Theta(1/\sigma_{s})$
near $0$, the probability mass on the width-$\Theta(s)=\Theta(\sigma_{m})$ active band is
$\Theta(\sigma_{m}/\sigma_{s})=\Theta(1/R)$.\hfill$\square$

\medskip
\noindent\emph{Proof of Proposition~\ref{thm:collapse}.} \emph{(i)} On the ordering $\phi$, the
clamp leaves a satisfying sample's margin unchanged and moves each violator to the boundary
value $\varepsilon$, giving the mixture CDF
$F_{\mathrm{CL}}(t)=v\,\mathbf{1}[t\ge\varepsilon]+(1-v)\,F^{+}(t)$, with $F^{+}$ the CDF of
$\nu^{+}$. Two evaluations bound $\mathrm{KS}=\sup_t|F_{\mathrm{CL}}(t)-F(t)|$. At
$t=\varepsilon$, below the real support, $F_{\mathrm{CL}}(\varepsilon)-F(\varepsilon)\ge v$. At
$t=q_\alpha$, the real margin's $\alpha$-quantile (so $q_\alpha=\Theta(\sigma_m)$),
$F(q_\alpha)-F_{\mathrm{CL}}(q_\alpha)\ge \alpha-v-(1-v)\,F^{+}(q_\alpha)$, and under
Corollary~\ref{cor:ordering}'s density hypothesis the generator places mass
$\Theta\bigl(q_\alpha/(R\,\sigma_m)\bigr)=\Theta(1/R)$ in $(0,q_\alpha]$, so
$(1-v)\,F^{+}(q_\alpha)=\Theta(1/R)$. Taking
$\alpha\to1$,
$\mathrm{KS}\ge\max\bigl(v,\,1-v-\Theta(1/R)\bigr)\ge\max(v,1-v)-\Theta(1/R)\ge\tfrac12-\Theta(1/R)$,
and the first evaluation alone gives $\mathrm{KS}\to1$ as $v\to1$. With several constraints the
statement reads per ordering through its own clamp step; empirically the atom sits at the
boundary on every dataset (Figure~\ref{fig:margins}).
\emph{(ii)} Each link is a smooth bijection onto its admissible interval with smooth inverse:
softplus maps $\mathbb{R}$ onto $(0,\infty)$ with inverse
$\mathrm{softplus}^{-1}(y)=\log(e^{y}-1)$; the affine logistic map
$\ell+(u-\ell)\,\sigma(\cdot)$ maps $\mathbb{R}$ onto $(\ell,u)$ with inverse the scaled logit
$z=\log\frac{t}{1-t}$ at $t=(x-\ell)/(u-\ell)$; the heavy-tailed $\exp$ link maps $\mathbb{R}$ onto
$(\ell,\infty)$ with inverse $\log(x-\ell)$; a free coordinate is the identity. Assembling in
the Fourier-Motzkin order, the bounds of $x_i$ read only $x_{<i}$ (Appendix~\ref{app:proof}),
so the inverse is computed coordinate-wise. Each $z_i$ is the link inverse of $x_i$ at the
bounds set by $x_{<i}$, defined and smooth exactly when every constrained $x_i$ is strictly
interior, i.e.\ on the relative interior of $\mathcal{M}$ (dependent coordinates are recovered by their
identities and contribute no coordinate). Induction along the order gives
$\varphi\circ\varphi^{-1}=\mathrm{id}$ on $\mathrm{relint}\,\mathcal{M}$ and
$\varphi^{-1}\circ\varphi=\mathrm{id}$ on $\mathbb{R}^{d}$, and interior placement
(Appendix~\ref{app:proof}) means $\varphi$ places no probability on the boundary. Realizability: given any
margin distribution $\mu$ on $(0,\infty)$, the coordinate distribution $\mathrm{softplus}^{-1}_{\#}\mu$ pushes
forward under softplus to exactly $\mu$. When several reduced constraints bound the same
variable, $\ell_i$ and $u_i$ are maxima and minima of linear forms, hence piecewise-linear.
$\varphi$ and $\varphi^{-1}$ are then smooth off the measure-zero set where the active bound
switches and remain continuous bijections, so the validity, boundary, and realizability
claims are unaffected.\hfill$\square$

Figure~\ref{fig:mechanism} shows the mechanism of Remark~\ref{rem:cond} in pictures. The change of coordinates that
gives $\Theta(1)$ conditioning turns the sub-resolution margin of Figure~\ref{fig:regimes} into a
unit-scale coordinate the GAN can learn, then decodes it back to valid samples with the distribution of the constraint margin
recovered.

\begin{figure}[!htb]\centering
\includegraphics[width=0.88\linewidth]{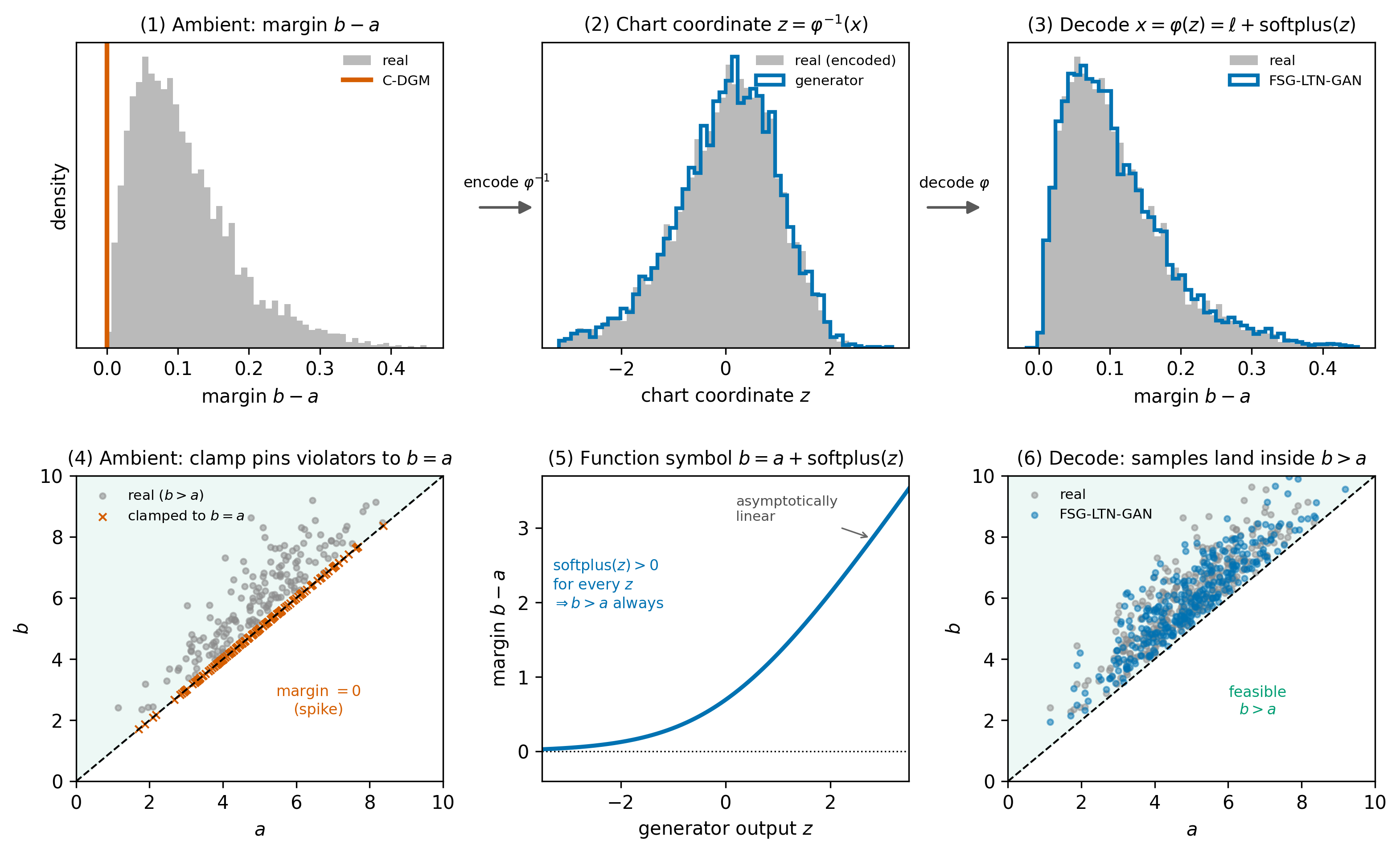}
\caption{\textbf{How function-symbol grounding repairs the high-$R$ collapse (schematic).} The same
high-$R$ ordering margin as in Figure~\ref{fig:regimes}; top row shows the margin
\emph{distribution} at each stage, bottom row the corresponding $(a,b)$ \emph{geometry}.
\emph{(1)} In ambient space the margin $b-a$ is sub-resolution, so the constraint layer clamps it
to a point mass at $0$. \emph{(2)} The chart coordinate $z=\varphi^{-1}(x)$ standardizes the margin to
unit scale, where the discriminator resolves it and a standard GAN learns the distribution of the real constraint margin (generator
in blue matches real in grey). \emph{(3)} Decoding $x=\varphi(z)=\ell+\mathrm{softplus}(z)$
carries that distribution back to the ambient margin. \emph{(4)} Geometrically the clamp moves every violating
sample onto the boundary $b=a$ (margin $0$, the point mass of panel~1). \emph{(5)} The function symbol
$b=a+\mathrm{softplus}(z)$ is positive for every generator output $z$, so $b>a$ holds by
construction ($\mathrm{Sat(KB)}=1$); it is smooth and monotone, so matching the distribution of
$z$ matches the margin's distribution, and asymptotically linear, so generator tails do not diverge.
\emph{(6)} Our decoded samples therefore land inside the feasible region $b>a$, matching the real
distribution with exact validity.}
\label{fig:mechanism}
\end{figure}

\section{Constraint-Layer Code Corrections and Evaluation Protocol}\label{app:fair}
We run the constraint layer from the code released by \citet{stoian2024cdgm}. For genuine $100\%$ validity (a fair
margin comparison) we correct a latent bug in its Fourier-Motzkin reduction, which keeps a stale term for a
variable whose coefficients cancel (triggered by equalities encoded as two inequalities), and run the layer in double
precision with the strict-inequality $\varepsilon$ set above one unit in the last place (its default
$\varepsilon=10^{-12}$ underflows here, falsely reporting $0\%$ valid). Our decoder runs in
double precision on Taxi, nycflights13, and the RQ3 benchmark, and in single precision on the
chemistry sets, whose reference data are single precision. These changes only help the constraint layer, and the distortion it
induces is unchanged. All generators are sampled in \texttt{eval} mode. The metrics (KS and satisfaction fractions) cannot be
improved by shrinking magnitudes. Margins are scored
against a fixed reference sample of the real data. On the four high-resolution datasets and
nycflights13, which ship as single files with no canonical split, this is a $20{,}000$-row
sample drawn once with a fixed seed from the same data the generators train on. On the RQ3
benchmark it is the held-out validation and test splits. Every method in a comparison is
scored against the same reference. The grounding link (softplus or $\exp$) is set by the fixed
dynamic-range rule of Appendix~\ref{app:hparams}. A definitional identity is scored satisfied when its residual is
within a per-dataset tolerance in raw units ($0.05$ on Alchemy, $10^{-3}$ on tmQM, $0.5$ on
nycflights13). Corollary~\ref{cor:identity} concerns exact satisfaction. The tolerance is why
unconstrained validity is small but nonzero on the chemistry sets, while the wider-scaled
nycflights13 identity is almost never met and its validity is $0.000$. Per-dataset $R$,
sizes, constraints, and full per-method results with standard deviations are given in
Appendices~\ref{app:data} to~\ref{app:extra}.

\section{Datasets, Constraints, and Resolution Ratios}\label{app:data}
Table~\ref{tab:dataspec} summarizes the four real high-resolution datasets used in RQ1, RQ2, and
RQ4. Each property is standardized to zero mean and unit variance before training. The resolution
ratio $R=\sigma_s/\sigma_m$ (Section~\ref{sec:problem}) is computed on the raw data
before training. The constraint-layer benchmark of \citet{stoian2024cdgm} used in RQ3 is
described in Appendix~\ref{app:rq3full}.

\begin{table}[H]\centering
\caption{The four real high-resolution datasets. $N$ rows, $D$ properties; $R$ is the resolution
ratio of each evaluated ordering margin.}
\label{tab:dataspec}
\small
\begin{tabular}{lccll}
\toprule
Dataset & $N$ & $D$ & Evaluated margin(s) & $R$ \\
\midrule
Alchemy & 202{,}579 & 12 & $U-U_0$;\ $H-U$;\ $H-G$ & $9{\times}10^{4}$;\ $7{\times}10^{6}$;\ $3{\times}10^{4}$ \\
tmQM & 108{,}541 & 8 & $E_{\mathrm{elec}}-E_{\mathrm{tot}}$ & $2.3{\times}10^{4}$ \\
Transition1x & 10{,}073 & 3 & $E_{\mathrm{TS}}-E_{\mathrm{react}}$;\ $E_{\mathrm{TS}}-E_{\mathrm{prod}}$ & $1.0{\times}10^{3}$;\ $8.1{\times}10^{2}$ \\
Taxi & 50{,}000 & 6 & duration;\ $\mathrm{total}-\mathrm{fare}$ & $1.1{\times}10^{3}$;\ $3.6$ \\
\bottomrule
\end{tabular}
\end{table}

\noindent The full structural constraint set per dataset is as follows.
\textbf{Alchemy} \citep{chen2019alchemy}, 12 molecular properties (mu, alpha, homo, lumo, gap, r2,
zpve, U0, U, H, G, Cv, following QM9 naming): positivity of mu (dipole moment), gap, and zpve
(zero-point vibrational energy); the algebraic identity
$\mathrm{lumo}=\mathrm{homo}+\mathrm{gap}$ over the frontier-orbital energies; the
thermochemical orderings $U_0<U<H$ and $G<H$, where $U_0$ and $U$ are the internal energies at
$0$\,K and $298$\,K, $H$ the enthalpy, and $G$ the free energy.
\textbf{tmQM} \citep{balcells2020tmqm}, 8 properties of transition-metal complexes: the high-$R$
dispersion ordering $E_{\mathrm{tot}}<E_{\mathrm{elec}}$; positivity of dipole, gap, and
polarizability; the low-$R$ identity $\mathrm{LUMO}=\mathrm{HOMO}+\mathrm{gap}$.
\textbf{Transition1x} \citep{schreiner2022transition1x}, reaction energy profiles: the transition
state is the maximum, $E_{\mathrm{TS}}>E_{\mathrm{react}}$ and $E_{\mathrm{TS}}>E_{\mathrm{prod}}$.
($0.08\%$ of raw rows violate $E_{\mathrm{TS}}>E_{\mathrm{prod}}$, near-barrierless reverse
reactions, and are retained, so the real reference margin carries that small negative mass.)
\textbf{Taxi} \citep{nyctlc}, NYC yellow-cab records (2024-01): the duration ordering
$\mathrm{dropoff}>\mathrm{pickup}$ on absolute second-scale timestamps (high $R$);
$\mathrm{total}>\mathrm{fare}$ (low $R$); positivity of distance and fare.

\section{Architecture and Training Protocol}\label{app:hparams}
All methods share one generator architecture so that only the constraint mechanism differs. The generator is a
multilayer perceptron with latent dimension 64 and three hidden layers of width 256 with
LeakyReLU($0.2$) and dropout $0.1$, and batch normalization on the two inner layers, followed by a
linear map to the property dimension. Weights use Kaiming-uniform initialization. The
discriminator is a multilayer perceptron $D\to256\to128\to1$ with LeakyReLU($0.2$), dropout $0.1$,
and a sigmoid output. Both train with Adam (learning rate $2{\times}10^{-4}$, $\beta=(0.5,0.999)$),
batch size 256, binary cross-entropy adversarial loss with label smoothing (targets $0.9$ real,
$0.1$ generated), for 1000 steps. Two numerical safeguards apply throughout. The decoder floors
each charted margin at $10^{-6}$ in raw units, and the encoder clips the real chart coordinates
at their $0.5$ and $99.5$ percentiles before standardization. Both act only in a thin boundary
layer, and neither affects validity. The analysis of Section~\ref{sec:method} concerns the
exact map. All experiments use ten
seeds (0 to 9, except the Alchemy and Transition1x inverse-design runs, which use 10 to 19).
For function-symbol grounding the same generator emits the
free coordinates $z$ and the grounding map $\varphi$ assembles the sample
(Section~\ref{sec:method}). No satisfaction loss is added because charted axioms hold by
construction. The constraint layer runs on the identical architecture
(Appendix~\ref{app:fair}). The grounding link is softplus by default and $\exp$ (a
multiplicative increment) for heavy-tailed margins. This is the method's only added
hyperparameter beyond the hybrid's two thresholds, and
it was set once per dataset by a fixed criterion. The criterion measures each charted margin's
dynamic range as the ratio $q_{99}/q_{50}$ of its positive part on the training data, and selects
the $\exp$ link when any charted margin reaches $q_{99}/q_{50}\ge 10$. Across the eleven datasets in
this paper it selects $\exp$ exactly once, on faults, whose bounding-box margins have
$q_{99}/q_{50}=18.0$ and $13.0$. It selects softplus everywhere else (every charted margin outside
faults has $q_{99}/q_{50}\le 5.1$). The released code records the resulting per-dataset choice.

\section{Full Per-Method Results (RQ1 and RQ2)}\label{app:fullrq}
Table~\ref{tab:rq1full} extends Table~\ref{tab:rq1} with all baselines and their validities;
Table~\ref{tab:rq2full} extends Table~\ref{tab:rq2} with the raw unconstrained architectures,
whose validities are given in its caption. The paired Wilcoxon test for FSG-LTN-GAN versus C-DGM returns $p=0.00195$ on
every dataset, the minimum attainable at $n=10$. In Table~\ref{tab:rq1full}, the projection row
coincides with the unconstrained row on tmQM for the reason given in
Table~\ref{tab:rq2full}'s caption. Moving violators to the boundary leaves the KS supremum
unchanged when the reference margin lies above it.

\begin{table}[H]\centering
\caption{Full RQ1 results, $n=10$ seeds, mean $\pm$ std. Validity is the fraction satisfying all
constraints. Margin KS is the two-sample KS between the generated margins and a fixed real reference sample (lower is
better). C-DGM and FSG-LTN-GAN are both exactly valid. Only FSG-LTN-GAN recovers the distribution of the constraint margin.}
\label{tab:rq1full}
\small
\begin{tabular}{llcc}
\toprule
Dataset & Method & Validity & Margin KS $\downarrow$ \\
\midrule
Alchemy & unconstrained GAN & $0.053\pm0.021$ & $0.539\pm0.014$ \\
 & projection & $1.000\pm0.000$ & $0.664\pm0.026$ \\
 & C-DGM (clamp) & $1.000\pm0.000$ & $1.000\pm0.000$ \\
 & \textbf{FSG-LTN-GAN} & $1.000\pm0.000$ & $\mathbf{0.040\pm0.007}$ \\
\midrule
tmQM & unconstrained GAN & $0.229\pm0.046$ & $0.575\pm0.037$ \\
 & projection & $1.000\pm0.000$ & $0.575\pm0.037$ \\
 & C-DGM (clamp) & $1.000\pm0.000$ & $1.000\pm0.000$ \\
 & \textbf{FSG-LTN-GAN} & $1.000\pm0.000$ & $\mathbf{0.045\pm0.010}$ \\
\midrule
Transition1x & unconstrained GAN & $0.293\pm0.384$ & $0.833\pm0.140$ \\
 & projection & $1.000\pm0.000$ & $0.881\pm0.093$ \\
 & C-DGM (clamp) & $1.000\pm0.000$ & $0.999\pm0.000$ \\
 & \textbf{FSG-LTN-GAN} & $1.000\pm0.000$ & $\mathbf{0.065\pm0.010}$ \\
\midrule
Taxi & unconstrained GAN & $0.253\pm0.087$ & $0.346\pm0.042$ \\
 & projection & $0.483\pm0.112$ & $0.345\pm0.042$ \\
 & C-DGM (clamp) & $1.000\pm0.000$ & $0.574\pm0.005$ \\
 & \textbf{FSG-LTN-GAN} & $1.000\pm0.000$ & $\mathbf{0.099\pm0.010}$ \\
\bottomrule
\end{tabular}
\end{table}

\begin{table}[!htb]\centering
\caption{Full RQ2 (margin KS by architecture, $n=10$, mean $\pm$ std). Raw CTGAN/TVAE validity is
below $3\%$ on the chemistry sets and $30$ to $46\%$ on Transition1x and Taxi. The $+$CL rows,
C-DGM, and FSG-LTN-GAN all reach $100\%$ validity. Only FSG-LTN-GAN recovers the distribution of the constraint margin across the architectures we
test. On tmQM and Taxi the raw and $+$CL margin-KS entries coincide exactly. This is forced
rather than copied, because the clamp maps the violating mass to the boundary point mass at
$\varepsilon$ while the reference margin lies above $\varepsilon$, so the KS supremum, attained
at $\varepsilon$, has the same value before and after. Validity and moment error do move. On Alchemy and
Transition1x the chained orderings share endpoints, so clamping shifts the evaluated margins and
the KS.}
\label{tab:rq2full}
\footnotesize
\setlength{\tabcolsep}{4pt}
\begin{tabular}{lcccc}
\toprule
Method & Alchemy & tmQM & Transition1x & Taxi \\
\midrule
CTGAN (raw) & $0.596\pm0.029$ & $0.579\pm0.052$ & $0.569\pm0.057$ & $0.419\pm0.024$ \\
CTGAN + CL & $0.674\pm0.036$ & $0.579\pm0.052$ & $0.592\pm0.040$ & $0.419\pm0.024$ \\
TVAE (raw) & $0.528\pm0.012$ & $0.517\pm0.011$ & $0.478\pm0.047$ & $0.348\pm0.029$ \\
TVAE + CL & $0.651\pm0.015$ & $0.517\pm0.011$ & $0.510\pm0.053$ & $0.348\pm0.029$ \\
C-DGM (CL, MLP) & $1.000\pm0.000$ & $1.000\pm0.000$ & $0.999\pm0.000$ & $0.574\pm0.005$ \\
\textbf{FSG-LTN-GAN} & $\mathbf{0.040\pm0.007}$ & $\mathbf{0.045\pm0.010}$ & $\mathbf{0.065\pm0.010}$ & $\mathbf{0.099\pm0.010}$ \\
\bottomrule
\end{tabular}
\end{table}

\section{The Constraint-Layer Benchmark, Full (RQ3)}\label{app:rq3full}
Table~\ref{tab:rq3full} extends Table~\ref{tab:rq3} with each dataset's constraint count and
charted count. KS is averaged over each dataset's constraints. C-DGM, FSG-all, and the hybrid all
reach $100\%$ validity; unconstrained validity ranges from $0.005$ (wids) to $0.736$ (url). The
hybrid charts the constraints its pre-run selects (free-generator satisfaction below $0.9$, margin
not discrete) and clamps the rest. On heloc and lcld it charts nothing and reduces to the
constraint layer. On url, news, and wids it charts one or two variables, with smaller but
significant gains on url and wids ($p=0.03$ and $0.004$) and a statistical tie on news
(heloc and lcld are ties as well). On faults (whose high-$R$ margin is the bounding-box
ordering $Y_{\max}>Y_{\min}$) it charts every variable and improves substantially (paired
Wilcoxon $p=0.002$, all ten seeds). The FSG-all
ablation, which charts every variable, is worse than the constraint layer on every dataset but
faults.

\begin{table}[H]\centering
\caption{Full RQ3 results on the benchmark of \citet{stoian2024cdgm}, $n=10$ seeds, mean $\pm$ std,
margin KS. ``charted'' is the number of variables the hybrid charts as function symbols;
\textbf{bold} marks where the hybrid significantly outperforms the constraint layer (paired Wilcoxon
$p<0.05$).}
\label{tab:rq3full}
\small
\begin{tabular}{lcccccc}
\toprule
Dataset & $|\Pi|$ & charted & $R_{\max}$ & C-DGM & FSG-all & \textbf{hybrid} \\
\midrule
faults & 4 & 4 & $1.8{\times}10^{3}$ & $0.745\pm0.014$ & $0.117\pm0.008$ & $\mathbf{0.117\pm0.008}$ \\
heloc & 7 & 0 & $2$ & $0.156\pm0.012$ & $0.208\pm0.005$ & $0.158\pm0.014$ \\
lcld & 4 & 0 & $1$ & $0.131\pm0.008$ & $0.657\pm0.004$ & $0.133\pm0.007$ \\
url & 8 & 1 & $1$ & $0.159\pm0.010$ & $0.196\pm0.018$ & $\mathbf{0.144\pm0.021}$ \\
news & 5 & 2 & $2$ & $0.294\pm0.008$ & $0.318\pm0.007$ & $0.297\pm0.007$ \\
wids & 31 & 1 & $4$ & $0.140\pm0.007$ & $0.319\pm0.004$ & $\mathbf{0.133\pm0.008}$ \\
\bottomrule
\end{tabular}
\end{table}

\section{Additional Analyses}\label{app:extra}
\paragraph{Density and coverage.} Table~\ref{tab:denscov} reports density and
coverage \citep{naeem2020reliable}, the constraint layer's own realism metrics,
computed on standardized features and shown alongside margin KS.

\begin{table}[H]\centering
\caption{Density and coverage \citep{naeem2020reliable} ($n=10$, mean $\pm$ std), with margin
KS for reference. Higher is better for density and coverage, lower for KS. The better method
per column is in bold.}
\label{tab:denscov}
\small
\begin{tabular}{llccc}
\toprule
Dataset & Method & Density & Coverage & Margin KS $\downarrow$ \\
\midrule
Alchemy & C-DGM & $\mathbf{0.923\pm0.031}$ & $0.619\pm0.014$ & $1.000\pm0.000$ \\
 & \textbf{FSG-LTN-GAN} &$0.780\pm0.034$ & $\mathbf{0.636\pm0.017}$ & $\mathbf{0.040\pm0.007}$ \\
\midrule
tmQM & C-DGM & $\mathbf{0.839\pm0.015}$ & $\mathbf{0.752\pm0.009}$ & $1.000\pm0.000$ \\
 & \textbf{FSG-LTN-GAN} &$0.801\pm0.022$ & $0.735\pm0.011$ & $\mathbf{0.045\pm0.010}$ \\
\midrule
Transition1x & C-DGM & $0.200\pm0.039$ & $0.050\pm0.005$ & $0.999\pm0.000$ \\
 & \textbf{FSG-LTN-GAN} &$\mathbf{0.248\pm0.030}$ & $\mathbf{0.203\pm0.009}$ & $\mathbf{0.065\pm0.010}$ \\
\midrule
Taxi & C-DGM & $0.066\pm0.008$ & $0.052\pm0.004$ & $0.574\pm0.005$ \\
 & \textbf{FSG-LTN-GAN} &$\mathbf{0.870\pm0.013}$ & $\mathbf{0.704\pm0.015}$ & $\mathbf{0.099\pm0.010}$ \\
\bottomrule
\end{tabular}
\end{table}

\paragraph{The failure is not a variable-ordering artifact.} A constraint layer must fix a
Fourier-Motzkin elimination order. Table~\ref{tab:ordering} runs the constraint layer on
Transition1x under every dependency-valid ordering: all stay near $1.0$ margin KS, so this
failure is intrinsic to clamping. Function-symbol grounding,
which fixes one ordering, is at $0.065$. All configurations are $100\%$ valid. (The two
apex-first orders reduce to the same triangular program and give identical runs.)

\begin{table}[H]\centering
\caption{Constraint-layer robustness to variable ordering (Transition1x, $n=10$, mean $\pm$ std,
margin KS). Bracketed lists are the Fourier-Motzkin variable elimination orders.}
\label{tab:ordering}
\small
\begin{tabular}{lc}
\toprule
Configuration & Margin KS $\downarrow$ \\
\midrule
C-DGM, natural order $[0,1,2]$ & $0.999\pm0.000$ \\
C-DGM, apex-first $[1,0,2]$ & $0.999\pm0.001$ \\
C-DGM, apex-first $[1,2,0]$ & $0.999\pm0.001$ \\
C-DGM, apex-last $[0,2,1]$ & $0.903\pm0.049$ \\
\textbf{FSG-LTN-GAN} & $\mathbf{0.065\pm0.010}$ \\
\bottomrule
\end{tabular}
\end{table}

\paragraph{Other backbones: CTGAN and TVAE in the chart.} Table~\ref{tab:rq2} showed
the \emph{failure} on CTGAN and TVAE. The chart supplies the \emph{fix} for them as well.
Because function-symbol grounding is a change of coordinates on the data, any tabular generator
can be trained in the chart: encode the training data by $\varphi^{-1}$, fit the unmodified
backbone with its package-default hyperparameters at $100$ epochs (the RQ2 protocol), and
decode its samples through $\varphi$.
Table~\ref{tab:backbones} applies this recipe to CTGAN and TVAE with no architectural change:
both become exactly valid and recover the margin, with margin KS $3.6$ to $8.6\times$ below their
constraint-layer counterparts (paired Wilcoxon $p{=}0.002$ on every dataset--backbone pair). One
backbone-specific effect persists. TVAE's latent-variance shrinkage, applied in chart coordinates,
propagates through the free base variable and inflates ambient per-property moment error on
tmQM and Transition1x (CTGAN in the chart does not show this, and margin KS is
unaffected either way).

\begin{table}[H]\centering
\caption{\textbf{The method transfers to other generator families} (margin KS $\downarrow$,
$n{=}10$, mean $\pm$ std). ``in chart'' trains the unmodified backbone on
$\varphi^{-1}$-encoded data and decodes through $\varphi$. Both chart columns are exactly
$100\%$ valid. ``$+$CL'' columns from Table~\ref{tab:rq2}.}
\label{tab:backbones}
\small
\setlength{\tabcolsep}{4.5pt}
\begin{tabular}{lcccc}
\toprule
Dataset & CTGAN$+$CL & \textbf{CTGAN in chart} & TVAE$+$CL & \textbf{TVAE in chart} \\
\midrule
Alchemy      & $0.674\pm0.036$ & $\mathbf{0.105\pm0.041}$ & $0.651\pm0.015$ & $\mathbf{0.127\pm0.005}$ \\
tmQM         & $0.579\pm0.052$ & $\mathbf{0.109\pm0.061}$ & $0.517\pm0.011$ & $\mathbf{0.060\pm0.014}$ \\
Transition1x & $0.592\pm0.040$ & $\mathbf{0.089\pm0.019}$ & $0.510\pm0.053$ & $\mathbf{0.141\pm0.037}$ \\
Taxi         & $0.419\pm0.024$ & $\mathbf{0.086\pm0.016}$ & $0.348\pm0.029$ & $\mathbf{0.064\pm0.014}$ \\
\bottomrule
\end{tabular}
\end{table}

\paragraph{Flight records (nycflights13).} The four main datasets and the RQ3 benchmark
were each chosen by the authors of one of the two papers being compared. We therefore ran a
fifth dataset that neither paper had used, the nycflights13 flight records ($327{,}346$ rows after cleaning, subsampled to $50{,}000$; six
properties), with the ordering $\mathrm{arrival}>\mathrm{departure}$ on absolute timestamps
spanning a year, the exact identity $\mathrm{dep}=\mathrm{sched}+\mathrm{delay}$, and positivity
of air time and distance. \emph{Before training} we computed $R=3.0\times10^{3}$ for the
ordering from the raw data and recorded the predicted outcome (constraint-layer margin KS near
$1$, FSG-LTN-GAN below $0.15$, both exactly valid; unconstrained validity near $0$), together
with explicit falsification criteria, in a file included in the code
release. The sweep then ran once, with ten seeds. Every prediction held
(Table~\ref{tab:flights}). The clamp collapses
the flight-duration margin to the boundary exactly as on Taxi and the chemistry sets, and the
chart recovers it, at equal exact validity.

\begin{table}[H]\centering
\caption{\textbf{Flight records (nycflights13)} ($n{=}10$, mean $\pm$ std). $R$
and the predicted outcome were recorded before training.}
\label{tab:flights}
\small
\begin{tabular}{lcc}
\toprule
Method & Validity & Margin KS $\downarrow$ \\
\midrule
unconstrained GAN & $0.000\pm0.000$ & $0.581\pm0.044$ \\
projection & $0.530\pm0.068$ & $0.541\pm0.041$ \\
C-DGM (clamp) & $1.000\pm0.000$ & $1.000\pm0.000$ \\
\textbf{FSG-LTN-GAN} & $1.000\pm0.000$ & $\mathbf{0.086\pm0.011}$ \\
\bottomrule
\end{tabular}
\end{table}

\paragraph{Where the margin-KS gain comes from.} FSG-LTN-GAN differs from the constraint layer
in two coupled ways: samples are valid by construction, and the discriminator operates on the
chart coordinates $z$, where every margin is unit scale, while the constraint layer's
discriminator sees the original, ill-scaled coordinates. To separate the two contributions we
train \emph{FSG-ambient-D}: the generator and the chart $\varphi$ are unchanged, so validity
remains exact, but the discriminator receives the decoded, standardized ambient sample instead
of $z$. Table~\ref{tab:ambientd} shows margin KS degrades on every dataset (paired Wilcoxon
$p{=}0.002$ each), and the degradation tracks $R$ (largest on the chemistry sets), yet remains
well below the clamp's. Both mechanisms therefore contribute. Generating \emph{inside} the
feasible region rather than clamping onto its boundary already improves the margin, and on
three of the four datasets the larger share of the recovery comes from the discriminator
operating at unit scale. On Transition1x the shares reverse, with generating inside
contributing most of the difference. The two are parts of the same
grounding. The chart supplies the coordinates, and the discriminator is best run in them. (Validity
is $1.000$ in every run except a single tmQM seed at $0.999$, a float32
identity-reconstruction round-off in the ambient arm's training-time decode.)

\begin{table}[H]\centering
\caption{\textbf{Validity by construction versus discriminating in the chart.} Margin KS
($n{=}10$, mean $\pm$ std). FSG-ambient-D keeps the chart generator (validity exact) but shows
the discriminator the decoded ambient sample; C-DGM from Table~\ref{tab:rq1full} for reference.}
\label{tab:ambientd}
\small
\begin{tabular}{lccc}
\toprule
Dataset & \textbf{FSG (D in chart)} & FSG-ambient-D & C-DGM (clamp) \\
\midrule
Alchemy      & $\mathbf{0.040\pm0.007}$ & $0.838\pm0.154$ & $1.000\pm0.000$ \\
tmQM         & $\mathbf{0.045\pm0.010}$ & $0.737\pm0.339$ & $1.000\pm0.000$ \\
Transition1x & $\mathbf{0.065\pm0.010}$ & $0.267\pm0.106$ & $0.999\pm0.000$ \\
Taxi         & $\mathbf{0.099\pm0.010}$ & $0.340\pm0.126$ & $0.574\pm0.005$ \\
\bottomrule
\end{tabular}
\end{table}

\paragraph{Margin-form preprocessing is a manual chart.} For a single ordering one can instead
transform the \emph{data}: on Taxi, replace dropoff with
$\mathrm{duration}=\mathrm{dropoff}-\mathrm{pickup}$, standardize, train the constraint layer
with $\mathrm{duration}>0$, and decode afterwards. Because the margin becomes its own column, its resolution
ratio drops to $R{=}1$ (from $1.1\times10^{3}$), and the clamp then largely preserves the margin distribution. Margin
KS falls $0.574\to0.134\pm0.020$ (duration KS $1.000\to0.118\pm0.033$) at exact validity
(Table~\ref{tab:marginform}). This is further evidence for the coordinate explanation. The preprocessing \emph{is} function-symbol grounding applied to the data by
hand, for one constraint. Three observations make the chart its general form. First, the
transform alone confers no validity. The same preprocessing with an unconstrained GAN is
$52.1\%\pm5.5$ valid (duration can still be generated negative, and the untouched constraints
are violated), so a clamp or a chart is still required. Second, rewriting a \emph{conjunction}
into consistent margin form (Alchemy's $U_0<U<H$ chain together with its lumo identity) must
proceed variable by variable in a triangular order, which is exactly the Fourier-Motzkin
substitution the chart automates. Third, the chart also standardizes what the discriminator sees
for \emph{all} constraints at once (previous paragraph), which manual rewriting achieves only
for the rewritten margins. FSG-LTN-GAN accordingly remains best ($0.099\pm0.010$; paired
Wilcoxon $p{=}0.002$ against the preprocessed constraint layer, all ten seeds).

\begin{table}[!htb]\centering
\caption{\textbf{Margin-form preprocessing on Taxi} ($n{=}10$, mean $\pm$ std). Preprocessing
the data into margin form is a manual, single-constraint chart: it recovers the clamp's margin,
though not validity (without a constraint mechanism) and not the untransformed margins.
Duration KS is the KS of the high-$R$ duration margin alone.}
\label{tab:marginform}
\small
\begin{tabular}{lccc}
\toprule
Method & Validity & Margin KS $\downarrow$ & Duration KS $\downarrow$ \\
\midrule
C-DGM, ambient (Table~\ref{tab:rq1full}) & $1.000\pm0.000$ & $0.574\pm0.005$ & $1.000\pm0.000$ \\
margin-form $+$ C-DGM & $1.000\pm0.000$ & $0.134\pm0.020$ & $0.118\pm0.033$ \\
margin-form $+$ unconstrained GAN & $0.521\pm0.055$ & $0.121\pm0.015$ & $0.111\pm0.024$ \\
\textbf{FSG-LTN-GAN} & $1.000\pm0.000$ & $\mathbf{0.099\pm0.010}$ & $\mathbf{0.081\pm0.024}$ \\
\bottomrule
\end{tabular}
\end{table}

\paragraph{Conditional inverse design.} The margin KS matters
when a consumer of the samples needs the constrained quantity to be realistic. In
\emph{conditional inverse design}, we condition the
generator on a target value $t$ for a designable property (the HOMO--LUMO gap on Alchemy,
the reactant energy on Transition1x, the electronic energy on tmQM) and ask for complete property
profiles that are (i) valid under the dataset's constraints, (ii) on target, and (iii) realistic
in their margins. All arms share the conditional architecture (the discriminator sees the sample--target pair). They differ only in the constraint mechanism: none, a predicate-style
satisfaction penalty, post-hoc projection, or the chart. In Table~\ref{tab:invdesign},
function-symbol grounding is the only configuration that delivers all three at once: exact
validity (the tmQM chart value is $0.9999$ before rounding, a float32 identity round-off), target error at the
unconstrained level, and margin KS an order of magnitude below
every alternative. The penalty arm shows the cost of soft satisfaction at high $R$. Pushed toward
satisfaction, it misses the target (target error $2$ to $5\times$ worse) yet still fails
validity. Projection attains validity but inherits the unconstrained margins.

\begin{table}[!htb]\centering
\caption{\textbf{Conditional inverse design} ($n{=}10$ seeds, mean $\pm$ std, 1500 steps).
Target error is the mean absolute deviation of the designable property from its conditioning
target (standardized units). Margin KS is as in the main text. Only the chart is simultaneously
valid, on target, and realistic in its margins.}
\label{tab:invdesign}
\small
\setlength{\tabcolsep}{4.5pt}
\begin{tabular}{llccc}
\toprule
Dataset & Mechanism & Validity & Target err.\ $\downarrow$ & Margin KS $\downarrow$ \\
\midrule
Alchemy & unconstrained & $0.089\pm0.050$ & $0.067\pm0.003$ & $0.629\pm0.072$ \\
(target: gap) & penalty & $0.019\pm0.009$ & $0.349\pm0.091$ & $0.624\pm0.035$ \\
 & projection & $0.999\pm0.001$ & $0.067\pm0.003$ & $0.652\pm0.050$ \\
 & \textbf{chart (FSG)} & $\mathbf{1.000\pm0.000}$ & $0.073\pm0.004$ & $\mathbf{0.049\pm0.008}$ \\
\midrule
Transition1x & unconstrained & $0.662\pm0.235$ & $0.053\pm0.004$ & $0.477\pm0.170$ \\
(target: $E_{\mathrm{react}}$) & penalty & $0.797\pm0.043$ & $0.124\pm0.009$ & $0.851\pm0.031$ \\
 & projection & $1.000\pm0.000$ & $0.053\pm0.004$ & $0.470\pm0.174$ \\
 & \textbf{chart (FSG)} & $\mathbf{1.000\pm0.000}$ & $0.060\pm0.006$ & $\mathbf{0.049\pm0.010}$ \\
\midrule
tmQM & unconstrained & $0.269\pm0.061$ & $0.065\pm0.004$ & $0.548\pm0.042$ \\
(target: $E_{\mathrm{elec}}$) & penalty & $0.036\pm0.016$ & $0.253\pm0.020$ & $0.608\pm0.051$ \\
 & projection & $1.000\pm0.000$ & $0.065\pm0.004$ & $0.548\pm0.042$ \\
 & \textbf{chart (FSG)} & $\mathbf{1.000\pm0.000}$ & $0.073\pm0.008$ & $\mathbf{0.038\pm0.006}$ \\
\bottomrule
\end{tabular}
\end{table}

\paragraph{Sensitivity of the hybrid's thresholds.} The hybrid's two thresholds are fixed once
($\tau_s{=}0.9$, $\tau_d{=}0.2$) and shared by every dataset in the paper. Sweeping the grid
$\tau_s\in\{0.70,0.80,0.85,0.90,0.95\}\times\tau_d\in\{0.05,0.10,0.20,0.30,0.50\}$ changes the
charted set on $5$ to $19$ of the $25$ cells, depending on the dataset, and retraining every
distinct alternative set the grid produces (ten seeds each, Table~\ref{tab:threshsens}) changes
no conclusion: every configuration keeps exact validity, alternates move margin KS by at most
${\approx}0.03$ on the low-$R$ datasets, and every cell near the paper's setting keeps every
win and tie. The one qualification is wids, where single extreme cells that chart seven or
more constraints move to KS $0.143$ to $0.152$, at or slightly above the constraint layer's
$0.140$. The one large
change shows where the sensitivity lies. On faults, the extreme $\tau_d{=}0.05$ row stops charting the two
bounding-box margins (including the high-$R$ one), and the main win shrinks (KS $0.117\to0.548$, still below the
constraint layer's $0.745$). The outcome is sensitive not to the threshold values but to
whether the high-$R$ margins are charted, which is what $R$ predicts before training.

\begin{table}[H]\centering
\caption{\textbf{Threshold sensitivity on the RQ3 benchmark} (margin KS, $n{=}10$, mean $\pm$
std). ``cells'' is how many of the $25$ grid cells select each charted set. The paper's cell is
$(\tau_s,\tau_d){=}(0.9,0.2)$.}
\label{tab:threshsens}
\small
\setlength{\tabcolsep}{4.5pt}
\begin{tabular}{lllll}
\toprule
Dataset & Paper's charted set (cells) & KS & Alternative sets (cells) & KS \\
\midrule
faults & 4 charted (20) & $0.117\pm0.008$ & 2 charted (5) & $0.548\pm0.006$ \\
heloc  & none (15)      & $0.158\pm0.014$ & 1 charted (10) & $0.132\pm0.008$ \\
lcld   & none (20)      & $0.133\pm0.007$ & 1 charted (5) & $0.134\pm0.007$ \\
url    & 1 charted (9)  & $0.144\pm0.021$ & none (16) & $0.150\pm0.011$ \\
news   & 2 charted (6)  & $0.297\pm0.007$ & none or 1 charted (19) & $0.292$ to $0.296$ \\
wids   & 1 charted (6)  & $0.133\pm0.008$ & 11 distinct sets (19) & $0.128$ to $0.152$ \\
\bottomrule
\end{tabular}
\end{table}

\paragraph{RQ4 in table form.} Table~\ref{tab:rq4} restates the RQ4 numbers of
Section~\ref{sec:rq4}.

\begin{table}[H]\centering
\caption{\textbf{RQ4: predicate placements on the high-$R$ orderings} (ordering
satisfaction, the per-ordering satisfaction fraction averaged over the dataset's orderings,
$n{=}10$, mean $\pm$ std). ``chance'' is a free generator with no constraint
mechanism. ``G-LTN-GAN'' places the predicate in the generator loss. ``D re-weight''
re-weights each sample's discriminator loss by its predicate satisfaction. ``D feature aug.''
appends the predicate value to the discriminator's input. FSG-LTN-GAN is function-symbol
grounding. No placement of the predicate moves the ordering more than $0.09$ above chance,
and the discriminator-feature placement falls well below it. Function-symbol grounding
satisfies the ordering for every sample.}
\label{tab:rq4}
\small
\begin{tabular}{lccccc}
\toprule
Dataset & chance & G-LTN-GAN & D re-weight & D feature aug. & \textbf{FSG-LTN-GAN} \\
\midrule
Alchemy & $0.49\pm0.03$ & $0.58\pm0.02$ & $0.51\pm0.03$ & $0.14\pm0.08$ & $\mathbf{1.000\pm0.000}$ \\
tmQM    & $0.53\pm0.08$ & $0.59\pm0.03$ & $0.55\pm0.08$ & $0.29\pm0.07$ & $\mathbf{1.000\pm0.000}$ \\
\bottomrule
\end{tabular}
\end{table}

\bodysection{Code and Data Availability}\label{app:code}  
All datasets are publicly available (Appendix~\ref{app:data}); code, experiment scripts, and the
prediction record of Appendix~\ref{app:extra} are available at
\href{https://github.com/nuuoe/FSG-LTN-GAN}{FSG-LTN-GAN}.

\end{document}